\newtheorem{theorem}{Theorem}
\newtheorem{lemma}[theorem]{Lemma}
\newtheorem{corollary}[theorem]{Corollary}
\DeclareMathOperator{\Tr}{tr}
\newcommand{\CtwoG}{C_2(g)}
\newcommand{\ConeRK}{C_1(r,K)}
\newcommand{\Cg}{C(g)} 
\title{Benign Overfitting in Linear Classifiers with a Bias Term}
\author{Yuta Kondo\\
University of Toronto\\
\texttt{yuta.kondo@mail.utoronto.ca}}
\date{}
\begin{document}

\maketitle

\begin{abstract}
Overparameterized models often generalize well even when they interpolate noisy training data. This is known as benign overfitting. For linear classification, \citet{Hashimoto2025} analyzed the phenomenon under a broad class of mixture distributions, but only for homogeneous classifiers without a bias term. We extend their framework to classifiers with an intercept. Benign overfitting still occurs, but the intercept perturbs the normalized Gram matrix of the noise and creates extra constraints on the covariance. These constraints are strongest with label noise. Their effect depends on the covariance: under isotropic noise they are dominated by the homogeneous conditions, while in anisotropic or noisy regimes they can raise the dimensionality needed for benign generalization. Thus the bias term changes the theory in some covariance regimes and leaves the asymptotic thresholds unchanged in others.
\end{abstract}

\section{Introduction}
Highly overparameterized models can interpolate noisy training data and still generalize well \citep{Zhang2017, Belkin2019}. This behavior, called benign overfitting, has been studied in linear regression \citep{Bartlett2020, Hastie2022}, kernel methods \citep{Liang2020}, and linear classification. In classification, much of the theory concerns the maximum margin classifier. For separable data it is the direction obtained by running gradient descent on logistic or exponential loss \citep{Soudry2018}, so its generalization properties are a natural benchmark for interpolating linear methods. Existing guarantees often assume Gaussian or sub-Gaussian mixtures \citep{Chatterji2021, Cao2021, Wang2022}.

\citet{Hashimoto2025} weakened these assumptions substantially. Working with a broad non-sub-Gaussian mixture model (Model EM below), they obtained test-error bounds and identified phase transitions without requiring sub-Gaussianity or isotropy. Like most of this literature, however, their analysis is for \textit{homogeneous} classifiers, i.e., without a bias term. In practice one almost always includes an intercept. \citet{Hashimoto2025} note that the inhomogeneous case can be reduced to the homogeneous one by appending a constant feature, but they leave open what that augmentation does to the conditions for benign overfitting.

We carry out that extension for both noiseless and noisy labels. Adding a bias perturbs the normalized Gram matrix of the features. The shared constant coordinate increases off-diagonal correlations among the noise vectors, and keeping those correlations small enough for the Hashimoto et al.\ argument forces stronger conditions on the effective dimension $\Tr(\Sigma)$ relative to the sample size $n$. The new constraints are most demanding when label noise is present, because the allowed Gram-matrix error must then scale with the noise rate $\eta$. At the same time, the effect is covariance-dependent: under isotropic noise the new terms are dominated by conditions already present in the homogeneous theory, so the asymptotic threshold for benign overfitting is unchanged.

Our contributions are as follows.
\begin{enumerate}
    \item We prove benign overfitting for the maximum margin classifier with a bias term under Model EM, with and without label noise.
    \item We identify the new conditions on $\Tr(\Sigma)$ and $n$ coming from the intercept, and determine when they dominate the homogeneous requirements.
    \item We prove auxiliary bounds (Appendix~\ref{sec:new_events}) controlling the bias-induced perturbations of the Gram matrix and related concentration events.
\end{enumerate}

\section{Preliminaries and Problem Setup}
We use the notation of \citet{Hashimoto2025} and work with their extended non-sub-Gaussian mixture model (EM).

\subsection{Model (EM)}
The observations consist of $n$ i.i.d copies ($\boldsymbol{x}_i, y_{N,i}$) of the pair ($\boldsymbol{x}, y_N$). Here, $y\in\{-1, 1\}$ is a random variable satisfying $P(y=1)=P(y=-1)=1/2$, and for a deterministic $\boldsymbol{\mu}\in\mathbb{R}^p$ we have
\[
\boldsymbol{x} = y\boldsymbol{\mu}+\boldsymbol{z}
\]
where $\boldsymbol{z}=g\Sigma^{1/2}\boldsymbol{\xi}$, characterized by:
\begin{enumerate}
    \item $g\in(0, \infty)$ such that, for some $l\in[2, \infty]$ and $k\in(2, 4]$, $\mathbb{E}g^2=1, \mathbb{E}g^l < \infty, \mathbb{E}g^{-k} < \infty$.
    \item $\Sigma \in \mathbb{R}^{p \times p}$ is a deterministic positive semidefinite matrix.
    \item $\boldsymbol{\xi}\in\mathbb{R}^p$ with independent entries, independent of $g$ such that, for some $r\in(2, 4]$ and $K>0$, $\mathbb{E}\xi_j=0, \mathbb{E}\xi_j^2=1, \mathbb{E}|\xi_j|^r \le K$, $\forall j = 1, ..., p$.
\end{enumerate}
For $\eta\in[0, 1/2)$, the observed label $y_N$ is generated such that $y_N=-y$ w.p. $\eta$ (noisy) and $y_N=y$ w.p. $1-\eta$ (clean).

\subsection{The Inhomogeneous Model}
We analyze inhomogeneous linear classifiers $sgn(\langle \boldsymbol{w}, \boldsymbol{x} \rangle + b)$. This is equivalent to a homogeneous classifier $sgn(\langle \tilde{\boldsymbol{w}}, \tilde{\boldsymbol{x}} \rangle)$ applied to the extended vectors:
\[
\tilde{\boldsymbol{x}}=(\boldsymbol{x}, 1), \quad \tilde{\boldsymbol{z}}=(\boldsymbol{z}, 1), \quad \tilde{\boldsymbol{\mu}}=(\boldsymbol{\mu}, 0).
\]
We study the maximum margin classifier $\hat{\tilde{\boldsymbol{w}}}$ on these extended features. For separable data this is again the gradient-descent limit for logistic or exponential loss \citep{Soudry2018}; the only change relative to Hashimoto et al.\ is that the feature vectors live in $\mathbb{R}^{p+1}$.

\subsection{Notations and Key Events}
We use the standard matrix notation ($X, Z, \boldsymbol{y}, \boldsymbol{y}_N$). When $\boldsymbol{z}_i \neq \boldsymbol{0}$, $\Delta(z)$ is the diagonal matrix of norms $\|\boldsymbol{z}_i\|$. Normalized quantities are written with a check: $\check{Z}=\Delta(z)^{-1}Z$.

The homogeneous analysis of \citet{Hashimoto2025} is organized around high-probability events $E_1(\varepsilon),\ldots,E_5(\gamma, \rho)$, where $\rho = \mathbb{E}[g^{-2}]/\Tr(\Sigma)$.

In the inhomogeneous model we use the analogous events for $\check{\tilde{Z}} = \Delta(\tilde{z})^{-1}\tilde{Z}$:
\begin{align}
\tilde{E}_1(\tilde{\varepsilon}) &:= \big\{\|\check{\tilde{Z}}\check{\tilde{Z}}^\top - I_n\|\leq \tilde{\varepsilon}\big\}, \quad \text{(Near-orthogonality of normalized noise)} \label{eq:E1}
\\
\tilde{E}_2(\tilde{\alpha}_2, \tilde{\alpha}_\infty) &:= \big\{\|\check{\tilde{Z}}\tilde{\boldsymbol{\mu}}\| \leq \tilde{\alpha}_2 \|\tilde{\boldsymbol{\mu}}\| \text{ and } \|\check{\tilde{Z}} \tilde{\boldsymbol{\mu}}\|_\infty \leq \tilde{\alpha}_\infty \|\tilde{\boldsymbol{\mu}}\|\big\}, \quad \text{(Alignment of noise with signal)} \label{eq:E2}
\\
\tilde{E}_3(\tilde{M}) &:= \big\{\max_{i} \|\tilde{\boldsymbol{z}}_i\|\leq \tilde{M}\big\}, \quad \text{(Bound on maximum noise norm)} \label{eq:E3}
\\
\tilde{E}_4(\tilde{\beta}, \rho) & := \Big\{ \Big|\dfrac{1}{n}\sum_{i} \dfrac{1}{\|\tilde{\boldsymbol{z}}_i\|^{2}}-\rho \Big|\leq \tilde{\beta}\rho  \Big\}, \quad \text{(Concentration of inverse noise norms)} \label{eq:E4}
\\
\tilde{E}_5(\tilde{\gamma}, \rho) & := \Big\{\Big|\dfrac{1}{n}\sum_{i} \dfrac{y_{N, i} y_i}{\|\tilde{\boldsymbol{z}}_i\|^2} -(1-2\eta)\rho \Big|\leq \tilde{\gamma}\rho \Big\}. \quad \text{(Concentration involving labels)} \label{eq:E5}
\end{align}

\textit{Remark on $\rho$:} In $\tilde{E}_4$ and $\tilde{E}_5$ we still concentrate around the homogeneous expectation $\rho$, not around $\mathbb{E}[1/\|\tilde{\boldsymbol{z}}\|^2]$. This lets us reuse the arguments of \citet{Hashimoto2025}. When $\Tr(\Sigma)$ is large, $\rho$ is a good proxy for the inhomogeneous mean; the parameters $\tilde{\beta}$ and $\tilde{\gamma}$ absorb both concentration error and the mean shift (Appendix~\ref{sec:new_events}).

Appendix~\ref{sec:new_events} relates the tilde events to the original events $E_i$.

\section{Main Results}
We now state test-error bounds for the inhomogeneous maximum margin classifier, obtained by checking the hypotheses of Theorems 3.2 and 3.5 in \citet{Hashimoto2025}.

\subsection{Noiseless Case}

We begin with the small-to-intermediate signal regime. Relative to the homogeneous case, we need additional control on $\Tr(\Sigma)$, recorded as $T_{Inhom}$, to keep the bias-induced perturbations small.

\begin{theorem}[Inhomogeneous, Noiseless, Intermediate Signal]
\label{thm:1}
Consider the inhomogeneous model with $\tilde{\boldsymbol{z}}=(\boldsymbol{z},1)$, where $\boldsymbol{z}$ is generated according to Model (EM) in the noiseless case ($\eta=0$). Assume $n \ge  \left(6\CtwoG\right)^\frac{k}{k-2} \delta^{-\frac{2}{k-2}}$.

Then, there exists a constant $C$ (depending only on the distribution of $g, \xi$) such that, provided:
\[
\|\mu\|^2 \ge C \delta^{-1/2}\|\Sigma^{1/2}\mu\|
\]
and that $\Tr(\Sigma) \ge C \cdot \max\{T_{Hom}, T_{Inhom}\}$, where
\begin{align*}
T_{Hom} &= \left(\frac{n}{\delta}\right)^{1/l} \max \left\{\left(\frac{n}{\delta}\right)^{2/r}n^{1/2}\|\Sigma\|_F\max\{p^{2/r-1/2}, n^{2/r}\}, \sqrt{\frac{n}{\delta}}n\|\Sigma^{1/2}\mu\|\right\}, \\
T_{Inhom} &= \left(\frac{n}{\delta}\right)^{2/k+1/l}n^{3/2},
\end{align*}
we have, with a probability of at least $1-5\delta$, the gradient descent iterate for the inhomogeneous classifier converges in direction to the maximum margin classifier $\hat{\tilde{\boldsymbol{w}}}$, and the following test error bound holds for a universal constant $c$:
\[
\mathbb{P}_{(\boldsymbol{x},y)}(\langle \hat{\tilde{\boldsymbol{w}}}, y\tilde{\boldsymbol{x}} \rangle < 0) \le c\|\mathbb{E}[\tilde{\boldsymbol{z}}\tilde{\boldsymbol{z}}^{\top}]\| \left(\frac{1}{\|\tilde{\boldsymbol{\mu}}\|^2} + \frac{1}{n\rho\|\tilde{\boldsymbol{\mu}}\|^4}\right).
\]
\end{theorem}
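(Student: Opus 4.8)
The plan is to reduce the inhomogeneous problem to the event-conditional ``general framework'' of \citet{Hashimoto2025} applied to the extended features, and then to pay for this reduction by controlling the perturbation that the deterministic last coordinate of $\tilde{\boldsymbol{z}}$ induces on the quantities appearing in $\tilde{E}_1,\dots,\tilde{E}_5$. First I would observe that the inhomogeneous maximum margin classifier is exactly the homogeneous one for the data $\{(\tilde{\boldsymbol{x}}_i,y_i)\}$, which is linearly separable on the events considered below; hence \citet{Soudry2018} gives that gradient descent on the logistic loss converges in direction to $\hat{\tilde{\boldsymbol{w}}}$, and it remains only to bound the test error of $\hat{\tilde{\boldsymbol{w}}}$. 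For that I would invoke Theorems 3.2 and 3.5 of \citet{Hashimoto2025} in their event-conditional form: whenever $\tilde{E}_1(\tilde{\varepsilon}),\dots,\tilde{E}_5(\tilde{\gamma},\rho)$ hold with parameters meeting the relevant admissibility inequalities, their conclusion holds verbatim with the effective covariance $\mathbb{E}[\tilde{\boldsymbol{z}}\tilde{\boldsymbol{z}}^\top]=\mathrm{diag}(\Sigma,1)$ in place of $\Sigma$ and $\tilde{\boldsymbol{\mu}}$ in place of $\boldsymbol{\mu}$, so that $\|\tilde{\boldsymbol{\mu}}\|=\|\boldsymbol{\mu}\|$, $\|\mathbb{E}[\tilde{\boldsymbol{z}}\tilde{\boldsymbol{z}}^\top]^{1/2}\tilde{\boldsymbol{\mu}}\|=\|\Sigma^{1/2}\boldsymbol{\mu}\|$, and the prefactor is $\|\mathbb{E}[\tilde{\boldsymbol{z}}\tilde{\boldsymbol{z}}^\top]\|$; in particular the two provisos on $\|\boldsymbol{\mu}\|$ and $\Tr(\Sigma)$ are exactly those of the corresponding homogeneous theorem, transcribed through $\tilde{\boldsymbol{\mu}}$ and $\mathrm{diag}(\Sigma,1)$. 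The whole remaining task is to verify the tilde events with good parameters on an event of probability at least $1-5\delta$.

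The core step -- and the content of the Appendix~A lemmas -- is to transfer the homogeneous events to their tilde counterparts. I would work on the intersection of the homogeneous events $E_1(\varepsilon),\dots,E_5(\gamma,\rho)$, which hold with probability at least $1-5\delta$ up to constants once $\Tr(\Sigma)\ge C\,T_{Hom}$ and $\|\boldsymbol{\mu}\|^2\ge C\delta^{-1/2}\|\Sigma^{1/2}\boldsymbol{\mu}\|$ by the event estimates of \citet{Hashimoto2025}, together with the auxiliary bounds $\min_i g_i^{-2}\lesssim(n/\delta)^{2/k}$ (from $\mathbb{E}g^{-k}<\infty$ and the sample-size hypothesis $n\ge(6\CtwoG)^{k/(k-2)}\delta^{-2/(k-2)}$), $\max_i g_i^2\lesssim(n/\delta)^{2/l}$ (from $\mathbb{E}g^{l}<\infty$), and $\min_i\|\Sigma^{1/2}\boldsymbol{\xi}_i\|^2\gtrsim\Tr(\Sigma)$ (from $\mathbb{E}|\xi_j|^r\le K$) -- these auxiliary events being, up to constants, already implied by or proved alongside $E_3$--$E_4$, so they cost nothing in the failure budget; in particular $\min_i\|\boldsymbol{z}_i\|^2\gtrsim(\delta/n)^{2/k}\Tr(\Sigma)$. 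For $\tilde{E}_1$ I would write $(\check{\tilde{Z}}\check{\tilde{Z}}^\top)_{ij}=(\langle\boldsymbol{z}_i,\boldsymbol{z}_j\rangle+1)/(\|\tilde{\boldsymbol{z}}_i\|\|\tilde{\boldsymbol{z}}_j\|)$ and decompose $\check{\tilde{Z}}\check{\tilde{Z}}^\top-\check{Z}\check{Z}^\top$ into a diagonal rescaling -- each row of $\check{Z}$ is multiplied by $\|\boldsymbol{z}_i\|/\|\tilde{\boldsymbol{z}}_i\|=(1+\|\boldsymbol{z}_i\|^{-2})^{-1/2}$, a two-sided conjugation of operator norm $\lesssim(\max_i\|\boldsymbol{z}_i\|^{-2})(1+\varepsilon)\lesssim(n/\delta)^{2/k}/\Tr(\Sigma)$ -- plus the genuinely rank-one term $\boldsymbol{v}\boldsymbol{v}^\top$ with $v_i=\|\tilde{\boldsymbol{z}}_i\|^{-1}$, of operator norm $\|\boldsymbol{v}\|^2=\sum_i\|\tilde{\boldsymbol{z}}_i\|^{-2}\lesssim n\min_i\|\boldsymbol{z}_i\|^{-2}\lesssim(n/\delta)^{2/k}n/\Tr(\Sigma)$. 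Demanding both pieces fall below the tolerance the framework imposes on $\tilde{\varepsilon}-\varepsilon$ -- which is roughly of order $(n/\delta)^{-1/l}n^{-1/2}$, the relevant scale involving $\max_i\|\boldsymbol{z}_i\|$ -- is exactly what produces $\Tr(\Sigma)\ge C\,T_{Inhom}$ with $T_{Inhom}=(n/\delta)^{2/k+1/l}n^{3/2}$, the binding constraint coming from the rank-one defect (the rescaling alone would need only $(n/\delta)^{2/k+1/l}n^{1/2}$). This gives $\tilde{E}_1(\tilde{\varepsilon})$ with $\tilde{\varepsilon}=\varepsilon+o(1)$.

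The remaining events transfer directly. Since $\tilde{\boldsymbol{\mu}}=(\boldsymbol{\mu},0)$ I have $(\check{\tilde{Z}}\tilde{\boldsymbol{\mu}})_i=(\|\boldsymbol{z}_i\|/\|\tilde{\boldsymbol{z}}_i\|)(\check{Z}\boldsymbol{\mu})_i$ with $\|\boldsymbol{z}_i\|/\|\tilde{\boldsymbol{z}}_i\|\le1$, so $\|\check{\tilde{Z}}\tilde{\boldsymbol{\mu}}\|\le\|\check{Z}\boldsymbol{\mu}\|$ and likewise in $\ell_\infty$, yielding $\tilde{E}_2(\alpha_2,\alpha_\infty)$ from $E_2(\alpha_2,\alpha_\infty)$; $\max_i\|\tilde{\boldsymbol{z}}_i\|^2=\max_i\|\boldsymbol{z}_i\|^2+1$ yields $\tilde{E}_3(\sqrt{M^2+1})$; and $\|\tilde{\boldsymbol{z}}_i\|^{-2}=\|\boldsymbol{z}_i\|^{-2}-\|\boldsymbol{z}_i\|^{-2}(\|\boldsymbol{z}_i\|^2+1)^{-1}$ gives $|\tfrac1n\sum_i\|\tilde{\boldsymbol{z}}_i\|^{-2}-\tfrac1n\sum_i\|\boldsymbol{z}_i\|^{-2}|\le(\max_i\|\boldsymbol{z}_i\|^{-2})\tfrac1n\sum_i\|\boldsymbol{z}_i\|^{-2}=o(\rho)$ under the same $\Tr(\Sigma)$ condition, so that $\tilde{E}_4(\tilde{\beta},\rho)$ and -- in the noiseless case, where $y_{N,i}y_i\equiv1$ and $1-2\eta=1$ -- $\tilde{E}_5(\tilde{\gamma},\rho)$ hold with $\tilde{\beta}=\beta+o(1)$ and $\tilde{\gamma}=\gamma+o(1)$. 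Finally, on this intersection (probability at least $1-5\delta$) all tilde events hold with parameters within $o(1)$ of admissible homogeneous ones, so feeding them into Theorems 3.2 and 3.5 of \citet{Hashimoto2025} delivers the direction convergence of gradient descent and the test error bound with the substitutions recorded in the first paragraph, which is precisely the claimed inequality.

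I expect the main obstacle to be the operator-norm perturbation bound for $\tilde{E}_1$: unlike the other four events, the bias coordinate adds not merely a benign diagonal rescaling but a genuine rank-one correction of operator norm $\asymp n\rho$, and showing that this stays below the precision the homogeneous framework inherits is exactly what forces the new dimensionality requirement $T_{Inhom}$ and is where the technical lemmas of Appendix~A do their real work; the transfer of $\tilde{E}_2$--$\tilde{E}_5$ and the final assembly are then routine bookkeeping.
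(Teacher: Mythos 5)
Your proposal is correct and follows essentially the same route as the paper: transfer $\tilde{E}_2,\tilde{E}_3$ for free, control the perturbations in $\tilde{E}_1,\tilde{E}_4$ via a high-probability bound on $\max_i\|\boldsymbol{z}_i\|^{-2}$ obtained from $\mathbb{E}[g^{-k}]<\infty$ plus the concentration of $\|\Sigma^{1/2}\boldsymbol{\xi}_i\|^2$, and identify the binding constraint as condition (iv) of Theorem~3.2(i), $\tilde{\varepsilon}\tilde{M}\sqrt{n\rho}\lesssim 1$, which is exactly what produces $T_{Inhom}=(n/\delta)^{2/k+1/l}n^{3/2}$. The one technical divergence is in $\tilde{E}_1$: you split the Gram-matrix perturbation into a two-sided diagonal rescaling plus the rank-one matrix $\boldsymbol{v}\boldsymbol{v}^\top$ with $v_i=\|\tilde{\boldsymbol{z}}_i\|^{-1}$, whereas the paper bounds the whole perturbation entrywise by its Frobenius norm ($\|P\|\le\|P\|_F\le\sqrt{n(n-1)}(2\varepsilon+1)\max_i\|\boldsymbol{z}_i\|^{-2}$); both give the same order $n(n/\delta)^{2/k}/\Tr(\Sigma)$, and your decomposition is arguably cleaner since it isolates the rank-one term as the dominant defect. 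Two small slips to fix: you write $\min_i g_i^{-2}$ and $n\min_i\|\boldsymbol{z}_i\|^{-2}$ where you mean $\max_i g_i^{-2}$ and $n\max_i\|\boldsymbol{z}_i\|^{-2}$ (the displayed magnitudes are nonetheless right), and the auxiliary tail bounds on $\max_i g_i^{-2}$ and $\max_i g_i^{-4}$ are not free --- the paper charges an extra $2\delta$ for them, which is precisely why the theorem's probability is $1-5\delta$ rather than $1-3\delta$.
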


\begin{proof}[Proof Sketch for Theorem \ref{thm:1}]
We check the hypotheses of Theorem 3.2(i) of \citet{Hashimoto2025} for the tilde variables and events. Appendix~\ref{sec:new_events} shows that $\tilde{E}_2$ and $\tilde{E}_3$ follow from $E_2$ and $E_3$, while $\tilde{E}_1$, $\tilde{E}_4$, and $\tilde{E}_5$ pick up extra terms from the constant coordinate in $\tilde{\boldsymbol{z}}$.

Under the stated assumptions those extras are small:
1. $\tilde{\beta} < 1/2$. Bound $\beta$ and the perturbation $\beta'$ from Lemma~\ref{lem:E4E5}. The assumption on $n$ controls $\beta$; making $\beta'$ small needs $\Tr(\Sigma) \succsim (n/\delta)^{4/k}$, which for large $n$ is weaker than the condition in step 3 (Appendix~\ref{sec:proof_thm1}).
2. The signal-strength hypotheses follow from the homogeneous assumptions ($T_{Hom}$), up to adjusting constants.
3. For $\tilde{\varepsilon}$ and $\tilde{M}$, use Lemma~\ref{lem:E1}. The strongest new requirement, $\Tr(\Sigma) \succsim (n/\delta)^{2/k+1/l}n^{3/2}$, comes from the perturbation in $\tilde{\varepsilon}$ in condition (iv) of Theorem 3.2(i).

The failure probability is at most $5\delta$. Full details are in Appendix~\ref{sec:proof_thm1}.
\end{proof}

Next, we consider the large signal regime.

\begin{theorem}[Inhomogeneous, Noiseless, Large Signal]
\label{thm:2}
Consider the inhomogeneous model with $\tilde{\boldsymbol{z}}=(\boldsymbol{z}, 1)$, where $\boldsymbol{z}$ is generated according to Model (EM) with $\eta=0$. Let $C_{H}>2$ be the constant required by Theorem 3.2(ii) of \citet{Hashimoto2025}. There exists a constant $\Cg$ depending only on the distribution of $g$. For any $\delta\in(0, \frac{1}{2})$, if the following conditions hold:
\begin{enumerate}
    \item (Large Signal) $\|\boldsymbol{\mu}\| \ge \frac{3\sqrt{2}}{2}C_{H}\|g\|_{L^l}\left(\frac{n}{\delta}\right)^{1/l}\sqrt{\mathrm{Tr}(\Sigma)}$
    \item (High Dimension) $\Tr(\Sigma) \ge 2\ConeRK\left(\frac{n}{\delta}\right)^{2/r}\max\{p^{2/r-1/2}, n^{2/r}\}\|\Sigma\|_F$
\end{enumerate}
then we have, with a probability of at least $1-2\delta$, the gradient descent iterate converges in direction to $\hat{\tilde{\boldsymbol{w}}}$, and the following test error bound holds for a universal constant $c$:
\[
\mathbb{P}_{(\boldsymbol{x},y)}(\langle \hat{\tilde{\boldsymbol{w}}}, y\boldsymbol{\tilde{x}} \rangle < 0) \le c\frac{\|\mathbb{E}[\tilde{\boldsymbol{z}}\tilde{\boldsymbol{z}}^{\top}]\|}{\|\tilde{\boldsymbol{\mu}}\|^2} = c\frac{\max(\|\Sigma\|, 1)}{\|\boldsymbol{\mu}\|^2}.
\]
\end{theorem}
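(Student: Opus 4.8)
My plan is to obtain Theorem~\ref{thm:2} as a corollary of Theorem~3.2(ii) of \citet{Hashimoto2025}, applied to the extended variables $\tilde{\boldsymbol{x}}_i=(\boldsymbol{x}_i,1)$, $\tilde{\boldsymbol{z}}_i=(\boldsymbol{z}_i,1)$, $\tilde{\boldsymbol{\mu}}=(\boldsymbol{\mu},0)$, in the same spirit as the proof of Theorem~\ref{thm:1}. Since Hashimoto et al.'s large-signal theorem is phrased purely through the events $E_1$, $E_3$ and the signal-to-noise comparison $\|\boldsymbol{\mu}\|\ge C_H M$, it suffices to verify the tilde analogues $\tilde{E}_1$, $\tilde{E}_3$ and $\|\tilde{\boldsymbol{\mu}}\|\ge C_H\tilde{M}$ for the extended model (and, should Theorem~3.2(ii) also invoke $E_2$, its tilde analogue as well). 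The first thing I would record are the two identities that make the conclusion transparent: $\|\tilde{\boldsymbol{\mu}}\|=\|\boldsymbol{\mu}\|$, and, using $\mathbb{E}g^2=1$, $\mathbb{E}\xi_j=0$, $\mathbb{E}\xi_j^2=1$, the matrix $\mathbb{E}[\tilde{\boldsymbol{z}}\tilde{\boldsymbol{z}}^\top]$ is block-diagonal with blocks $\Sigma$ and $1$, so $\|\mathbb{E}[\tilde{\boldsymbol{z}}\tilde{\boldsymbol{z}}^\top]\|=\max(\|\Sigma\|,1)$; substituting these into the Theorem~3.2(ii) bound immediately produces the claimed $c\,\max(\|\Sigma\|,1)/\|\boldsymbol{\mu}\|^2$.

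Next I would fix the two high-probability events that account for the $1-2\delta$ guarantee. Hashimoto et al.'s concentration lemma for the normalized Gram matrix, combined with condition~2 (whose explicit factor $2$ in front of $\ConeRK$ is calibrated for this), gives $E_1(\varepsilon)$ with $\varepsilon\le 1/2$ on an event of probability at least $1-\delta$. Separately, a moment bound yields $\max_i g_i\le\|g\|_{L^l}(n/\delta)^{1/l}$, and condition~2 also forces $\Tr(\Sigma)\gg\|\Sigma\|_F$, so every $\|\Sigma^{1/2}\boldsymbol{\xi}_i\|^2$ stays below $\tfrac{9}{4}\Tr(\Sigma)$; hence $E_3(M)$ holds with $M=\tfrac{3}{2}\|g\|_{L^l}(n/\delta)^{1/l}\sqrt{\Tr(\Sigma)}$ on an event of probability at least $1-\delta$ (the $g$-dependent constant $\Cg$ absorbing the moment factors). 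Intersecting the two events costs total probability at most $2\delta$, matching the claim.

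Then I would transfer these to the tilde world. For $\tilde{E}_3$: since $\|\tilde{\boldsymbol{z}}_i\|^2=\|\boldsymbol{z}_i\|^2+1\le M^2+1\le 2M^2$ (as $M\ge1$), the choice $\tilde{M}=\sqrt2\,M$ works, and condition~1 reads $\|\tilde{\boldsymbol{\mu}}\|=\|\boldsymbol{\mu}\|\ge\frac{3\sqrt2}{2}C_H\|g\|_{L^l}(n/\delta)^{1/l}\sqrt{\Tr(\Sigma)}=\sqrt2\,C_H M\ge C_H\tilde{M}$, which is precisely the large-signal hypothesis of Theorem~3.2(ii). For $\tilde{E}_1$ I would use the decomposition, with $D=\mathrm{diag}\big(1/\sqrt{1+\|\boldsymbol{z}_i\|^{-2}}\big)$ and $\boldsymbol{u}=(\|\boldsymbol{z}_i\|^{-1})_i$,
\[
\check{\tilde{Z}}\check{\tilde{Z}}^\top - I_n = D\big(\check{Z}\check{Z}^\top - I_n\big)D + \big(D^2 - I_n\big) + D\boldsymbol{u}\boldsymbol{u}^\top D ,
\]
whose first term has norm $\le\varepsilon\le1/2$ (as $\|D\|\le1$), while $\|D^2-I_n\|\le\max_i\|\boldsymbol{z}_i\|^{-2}$ and $\|D\boldsymbol{u}\boldsymbol{u}^\top D\|=\sum_i(\|\boldsymbol{z}_i\|^2+1)^{-1}$ are controlled via the $g^{-k}$-moment bound and are negligible because condition~2 makes $\Tr(\Sigma)$ enormous relative to $n$; this is the content of Lemma~\ref{lem:E1}, giving $\tilde{E}_1(\tilde{\varepsilon})$ with $\tilde{\varepsilon}$ below whatever fixed constant Theorem~3.2(ii) can tolerate. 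If $\tilde{E}_2$ is needed, it transfers for free: each coordinate of $\check{\tilde{Z}}\tilde{\boldsymbol{\mu}}$ equals $\langle\boldsymbol{z}_i,\boldsymbol{\mu}\rangle/\sqrt{\|\boldsymbol{z}_i\|^2+1}$, so $\|\check{\tilde{Z}}\tilde{\boldsymbol{\mu}}\|\le\|\check{Z}\boldsymbol{\mu}\|$ and likewise in $\|\cdot\|_\infty$ (as in Appendix~\ref{sec:new_events}); note that, unlike Theorem~\ref{thm:1}, no separate signal-alignment hypothesis is imposed here, since the large-signal condition subsumes it.

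On the intersection of the two events, all hypotheses of Theorem~3.2(ii) hold for the extended model, so the data $\{(\tilde{\boldsymbol{x}}_i,y_i)\}$ is linearly separable, gradient descent on the logistic loss converges in direction to the maximum margin classifier $\hat{\tilde{\boldsymbol{w}}}$, and the test error is at most $c\,\|\mathbb{E}[\tilde{\boldsymbol{z}}\tilde{\boldsymbol{z}}^\top]\|/\|\tilde{\boldsymbol{\mu}}\|^2=c\,\max(\|\Sigma\|,1)/\|\boldsymbol{\mu}\|^2$. I expect the main obstacle to be the $\tilde{E}_1$ perturbation estimate: one must show the bias coordinate's contribution to the normalized Gram matrix — both the rank-one term $D\boldsymbol{u}\boldsymbol{u}^\top D$ and the rescaling term $D^2-I_n$ — stays below the fixed threshold of Theorem~3.2(ii), uniformly over admissible $\Sigma$ and using only $\mathbb{E}g^{-k}<\infty$. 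What makes this tractable, and what distinguishes the large-signal regime from Theorem~\ref{thm:1}, is that condition~2 already forces $\Tr(\Sigma)$ so large that the perturbation is swallowed by the slack in the homogeneous high-dimensionality requirement, so no extra $T_{Inhom}$-type condition is needed.
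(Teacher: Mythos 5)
The core of your argument coincides with the paper's proof: apply Theorem 3.2(ii) of Hashimoto et al.\ to the extended variables, use condition~2 to get $\varepsilon\le 1/2$ and hence $E_3(M)$ with $M=(1+\varepsilon)\|g\|_{L^l}(n/\delta)^{1/l}\sqrt{\Tr(\Sigma)}\le\frac32\|g\|_{L^l}(n/\delta)^{1/l}\sqrt{\Tr(\Sigma)}$, transfer to $\tilde E_3$ via $\tilde M=\sqrt{M^2+1}\le\sqrt2\,M$ (which, as in the paper, silently needs $M\ge1$, i.e.\ Lemma~\ref{lem:M_bound} and ``sufficiently large $n$''), observe that condition~1 is exactly $\|\tilde{\boldsymbol{\mu}}\|\ge C_H\sqrt2\,M\ge C_H\tilde M$, compute $\|\mathbb{E}[\tilde{\boldsymbol{z}}\tilde{\boldsymbol{z}}^\top]\|=\max(\|\Sigma\|,1)$, and account for $2\delta$. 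Up to that point you have reproduced the paper's proof.

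The problem is your treatment of $\tilde E_1$. In the paper's reading, Theorem 3.2(ii) (the large-signal branch) requires only $\tilde E_3$ and $\|\tilde{\boldsymbol{\mu}}\|\ge C_H\tilde M$, which is why Theorem~\ref{thm:2} carries no $T_{Inhom}$-type hypothesis. You instead make $\tilde E_1$ load-bearing and justify it by asserting that ``condition~2 makes $\Tr(\Sigma)$ enormous relative to $n$.'' That assertion is false: condition~2 lower-bounds $\Tr(\Sigma)$ only relative to $\|\Sigma\|_F$, i.e.\ it constrains the effective rank and $p$, not the magnitude of $\Tr(\Sigma)$ itself. For instance $\Sigma=p^{-1}I_p$ with $p\succsim n^{8/r}\delta^{-4/r}$ satisfies condition~2 while $\Tr(\Sigma)=1$, and then the perturbation in Lemma~\ref{lem:E1} (or equivalently your rank-one term $\|D\boldsymbol{u}\boldsymbol{u}^\top D\|=\sum_i(\|\boldsymbol{z}_i\|^2+1)^{-1}$), which scales like $n^{1+2/k}\delta^{-2/k}/\Tr(\Sigma)$, is enormous rather than negligible. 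So if $\tilde E_1$ really were a hypothesis of Theorem 3.2(ii), your proof would not close without adding an explicit condition of the form $\Tr(\Sigma)\succsim n^{1+2/k}\delta^{-2/k}$, which the theorem statement does not contain; the argument only works because that hypothesis is not needed in this regime. On the positive side, your exact decomposition $\check{\tilde Z}\check{\tilde Z}^\top-I_n=D(\check Z\check Z^\top-I_n)D+(D^2-I_n)+D\boldsymbol{u}\boldsymbol{u}^\top D$ is correct and is a cleaner route to Lemma~\ref{lem:E1} than the paper's entrywise Frobenius-norm bound (it isolates the bias contribution as a rank-one matrix plus a diagonal rescaling), but it cannot substitute for the missing lower bound on $\Tr(\Sigma)$.
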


\begin{proof}[Proof Sketch for Theorem \ref{thm:2}]
We apply Theorem 3.2(ii) of \citet{Hashimoto2025} to the inhomogeneous variables. The key condition is $\|\tilde{\boldsymbol{\mu}}\| \ge C_H\tilde{M}$. The High Dimension condition ensures $\varepsilon \le 1/2$ and that $E_3(M)$ holds with high probability. This allows us to apply Lemma~\ref{lem:M_bound} (Appendix~\ref{sec:aux_lemmas}), showing $M\ge 1$ (for sufficiently large $n$). Thus, by Lemma~\ref{lem:E3}, $\tilde{M} = \sqrt{M^2+1} \le \sqrt{2}M$. A sufficient condition is $\|\boldsymbol{\mu}\| \ge C_H\sqrt{2}M$. Substituting the definition of $M$ and using $\varepsilon \le 1/2$, we find that the Large Signal condition of the theorem is sufficient. The detailed proof is in Appendix~\ref{sec:proof_thm2}.
\end{proof}

\subsection{Noisy Case}

We now consider the case with label noise ($\eta > 0$).

\begin{theorem}[Inhomogeneous, Noisy]
\label{thm:3}
Consider the inhomogeneous model with $\tilde{\boldsymbol{z}}=(\boldsymbol{z}, 1)$, where $\boldsymbol{z}$ is generated according to Model (EM) with $\eta\in(0, \frac{1}{2})$. There exists a constant $C$ depending only on the distribution of $g, \boldsymbol{\xi}$ such that, if
\[
\|\boldsymbol{\mu}\|^2 \ge C\max\{\frac{1}{\eta}, \frac{1}{1-2\eta}\}\delta^{-1/2}\|\Sigma^{1/2}\boldsymbol{\mu}\|,
\]
and $\Tr(\Sigma) \ge C \cdot \max\{T'_{Hom}, T'_{Inhom}\}$, where
\begin{align*}
T'_{Hom} &= \frac{1}{\eta}\left(\frac{n}{\delta}\right)^{2/r+1/l}n^{1/2} \max \{p^{2/r-1/2}, n^{2/r}\}\|\Sigma\|_F, \\
T'_{Inhom} &= \frac{1}{\eta}\left(\frac{n}{\delta}\right)^{2/k+1/l}n^{3/2}
\end{align*}

together with one of the following holds:
\begin{enumerate}
    \item $\Tr(\Sigma) \ge C \left(\frac{n}{\delta}\right)^{1/2+1/l}n\|\Sigma^{1/2}\boldsymbol{\mu}\|$,
    \item $\|\boldsymbol{\mu}\|^2 \ge \frac{C}{\min(\eta, 1-2\eta)}\left(\frac{n}{\delta}\right)^{1/2+1/l}\|\Sigma^{1/2}\boldsymbol{\mu}\| \quad \text{and} \quad \Tr(\Sigma) \ge \frac{C}{\min(\eta, 1-2\eta)}\left(\frac{n}{\delta}\right)^{1+1/l}n^{1/2}\frac{\|\Sigma^{1/2}{\boldsymbol{\mu}}\|^2}{\|\boldsymbol{\mu}\|^2}$
\end{enumerate}
then, for $n \ge \delta^{-\frac{2}{k-2}}\left(\frac{32\CtwoG}{\min\{\eta, 1-2\eta\}}\right)^\frac{k}{k-2}$, with probability at least $1-6\delta$, the gradient descent iterate converges in direction to $\hat{\tilde{\boldsymbol{w}}}$, and the following test error bound holds:
\[
\mathbb{P}_{(\boldsymbol{x},y)}(\langle\hat{\tilde{\boldsymbol{w}}}, y\boldsymbol{\tilde{x}}\rangle < 0) \le \eta + \frac{c\|\mathbb{E}[\tilde{\boldsymbol{z}}\tilde{\boldsymbol{z}}^{\top}]\|}{(1-2\eta)^2}\left(\eta n \rho + \frac{1}{\|\boldsymbol{\tilde{\mu}}\|^2} + \frac{1}{n\rho\|\boldsymbol{\tilde{\mu}}\|^4}\right).
\]
\end{theorem}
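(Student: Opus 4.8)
The plan is to invoke Theorem 3.5 of \citet{Hashimoto2025} — the general noisy-case framework — applied to the extended variables $\tilde{\boldsymbol{x}}=(\boldsymbol{x},1)$, $\tilde{\boldsymbol{z}}=(\boldsymbol{z},1)$, $\tilde{\boldsymbol{\mu}}=(\boldsymbol{\mu},0)$, carrying the label-noise parameter $\eta$ through exactly as the proof of Theorem~\ref{thm:1} did in the noiseless case. Theorem 3.5 takes as input the events $\tilde E_1(\tilde\varepsilon)$ through $\tilde E_5(\tilde\gamma,\rho)$ (possibly one further concentration event), a signal-strength condition, and a dimension condition, and outputs (a) in-direction convergence of gradient descent to $\hat{\tilde{\boldsymbol{w}}}$ on the separable sample and (b) the quoted error bound with the extra $\eta$ and $\eta n\rho$ terms. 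So the task reduces to exhibiting admissible parameters $(\tilde\varepsilon,\tilde\alpha_2,\tilde\alpha_\infty,\tilde M,\tilde\beta,\tilde\gamma)$ for which these events hold simultaneously with probability at least $1-6\delta$, and checking that the stated conditions on $\|\boldsymbol{\mu}\|$ and $\Tr(\Sigma)$ suffice for that choice.

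For the event verification I would go through Appendix~\ref{sec:new_events} coordinate by coordinate. Since the bias coordinate only adds a deterministic $1$ to $\|\tilde{\boldsymbol z}_i\|^2$ and is orthogonal to $\tilde{\boldsymbol\mu}$, the events $\tilde E_2,\tilde E_3$ transfer essentially verbatim from $E_2,E_3$ and cost nothing new. The work is in $\tilde E_1,\tilde E_4,\tilde E_5$. For $\tilde E_1$, Lemma~\ref{lem:E1} bounds $\tilde\varepsilon$ by $\varepsilon$ plus a bias-induced perturbation driven by $\tilde M^2\,\tfrac1n\sum_i\|\boldsymbol z_i\|^{-2}$; requiring this to stay within the noisy tolerance forces $\Tr(\Sigma)\succsim \tfrac1\eta(n/\delta)^{2/k+1/l}n^{3/2}$, which is precisely $C\,T'_{Inhom}$, the $1/\eta$ appearing because in the noisy analysis the allowed slack in $\tilde\varepsilon$ scales with $\eta$. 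For $\tilde E_4,\tilde E_5$, Lemma~\ref{lem:E4E5} writes $\tilde\beta$ (resp. $\tilde\gamma$) as the homogeneous $\beta$ (resp. $\gamma$) plus a shift $\beta'$ (resp. $\gamma'$) of order $\tfrac1{n\rho}\sum_i\|\boldsymbol z_i\|^{-4}$, coming from $\|\tilde{\boldsymbol z}_i\|^{-2}=\|\boldsymbol z_i\|^{-2}-\|\boldsymbol z_i\|^{-2}(\|\boldsymbol z_i\|^{2}+1)^{-1}$. Since $\mathbb E[g^{-4}]$ need not be finite, I would truncate $g^{-1}$ at level $(n/\delta)^{1/k}$, giving $\beta',\gamma'\lesssim(n/\delta)^{2/k}/\Tr(\Sigma)$; demanding $\tilde\gamma\le c\min(\eta,1-2\eta)$ — needed because $(1-2\eta)-\tilde\gamma$ and $\eta$-type quantities sit in denominators of Theorem 3.5's conclusion — then reduces to the stated $\Tr(\Sigma)\succsim C\,T'_{Hom},\,C\,T'_{Inhom}$, while the bound on the \emph{un}-perturbed $\beta,\gamma$ is exactly what $n\ge\delta^{-2/(k-2)}(32\CtwoG/\min\{\eta,1-2\eta\})^{k/(k-2)}$ delivers through the homogeneous concentration results. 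The signal condition $\|\boldsymbol\mu\|^2\ge C\max\{1/\eta,1/(1-2\eta)\}\delta^{-1/2}\|\Sigma^{1/2}\boldsymbol\mu\|$ is just the $\tilde E_2$-flavoured hypothesis of Theorem 3.5 rewritten in original coordinates using $\|\tilde{\boldsymbol\mu}\|=\|\boldsymbol\mu\|$.

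The ``one of the following holds'' dichotomy mirrors the two sub-regimes inside Theorem 3.5: whether the signal–noise cross term is tamed by the ambient dimension ($\Tr(\Sigma)$ large relative to $n^{3/2}\|\Sigma^{1/2}\boldsymbol\mu\|$) or by a genuinely large signal ($\|\boldsymbol\mu\|^2$ large, with a matching dimension bound in terms of $\|\Sigma^{1/2}\boldsymbol\mu\|^2/\|\boldsymbol\mu\|^2$). In either branch the corresponding hypothesis of Theorem 3.5, transported through the extended vectors and combined with $T'_{Hom},T'_{Inhom}$, is met; I would carry whichever alternative is assumed and check it line by line, picking up one extra $\delta$ of failure probability from the additional concentration event beyond the five $\tilde E_i$ (hence the total $6\delta$). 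Assembling the pieces — convergence to $\hat{\tilde{\boldsymbol w}}$ on the intersection of the good events, then substituting the admissible parameters into Theorem 3.5's error bound and using $\|\mathbb E[\tilde{\boldsymbol z}\tilde{\boldsymbol z}^\top]\|=\max(\|\Sigma\|,1)$ — yields the claimed inequality.

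The main obstacle is the joint control of the $\tilde\varepsilon$-perturbation and the $\tilde\gamma$-perturbation in the noisy regime: both are governed by the lower tail of $\|\boldsymbol z_i\|^2$ (equivalently the upper tail of $g_i^{-1}$), but now the tolerated sizes are no longer absolute constants — they shrink like $\eta$ and $1-2\eta$ — so the truncation level, and hence the $(n/\delta)^{2/k}$ and $1/\eta$ factors in $T'_{Inhom}$, must be chosen so that after union-bounding over the (at most six) events the residual noisy-concentration errors in $\tilde E_4,\tilde E_5$ stay strictly below $\min(\eta,1-2\eta)$ while $\tilde E_1$ stays below $1/2$. Packaging these coupled requirements into the single clean condition $\Tr(\Sigma)\ge C\max\{T'_{Hom},T'_{Inhom}\}$ (plus one of the two alternatives), rather than a messier intertwined inequality, is the delicate bookkeeping; the remainder is a faithful re-run of the proof of Theorem~\ref{thm:1} with $\eta$ tracked through \citeauthor{Hashimoto2025}'s noisy framework.
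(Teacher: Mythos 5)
Your proposal follows essentially the same route as the paper's proof: apply Theorem 3.5 of \citet{Hashimoto2025} to the extended variables, transfer the events via the appendix lemmas, and extract $T'_{Inhom}$ from the $\eta$-scaled tolerance on the Gram-matrix perturbation coupled with $\tilde M\sqrt{n\rho}$, with the same $6\delta$ accounting. The only slips are cosmetic: the paper derives the $n^{3/2}(n/\delta)^{2/k+1/l}$ requirement from the $(N_C)$ product $\tilde\varepsilon\tilde M\sqrt{n\rho}\le\eta/2$ (the bare bound $\tilde\varepsilon\le\min\{\eta,1-2\eta\}/8$ only needs $\Tr(\Sigma)\succsim n(n/\delta)^{2/k}/\min\{\eta,1-2\eta\}$), and Lemma~\ref{lem:E4E5} gives $\beta'\propto(n/\delta)^{4/k}/\Tr(\Sigma)$ rather than $(n/\delta)^{2/k}/\Tr(\Sigma)$ — neither affects the conclusion since both terms are dominated by $T'_{Inhom}$.
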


\begin{proof}[Proof Sketch for Theorem \ref{thm:3}]
We check Theorem 3.5 of \citet{Hashimoto2025}: $\tilde{\varepsilon}$, $\tilde{\beta}$, and $\tilde{\gamma}$ must be small relative to $\eta$, and condition $(N_C)$ must hold.

1. Need $\tilde{\beta}, \tilde{\gamma} \le \frac{\min\{\eta, 1-2\eta\}}{8}$. The lower bound on $n$ controls $\beta$; the bound on $\Tr(\Sigma)$ controls $\beta'$ (Lemma~\ref{lem:E4E5}). The latter is weaker than the condition in step 3 for large $n$.

2. Need $\tilde{\varepsilon} \le \frac{\min\{\eta, 1-2\eta\}}{8}$. Lemma~\ref{lem:E1} gives a requirement of the form $\Tr(\Sigma) \succsim n(n/\delta)^{2/k}/\min\{\eta, 1-2\eta\}$, again weaker than step 3.

3. Condition $(N_C)$ includes $\tilde{\varepsilon} \tilde{M} \sqrt{n\rho} \le \eta/2$. This is where the leading new term $n^{3/2}(n/\delta)^{1/l+2/k}$ appears, through the perturbation $T$ in Lemma~\ref{lem:E1}. The rest of $(N_C)$ follows from $T'_{Hom}$ and the stated signal regimes.

See Appendix~\ref{sec:proof_thm3}.
\end{proof}

\subsection{Isotropic Case}

\begin{corollary}
\label{cor:1}
Consider the inhomogeneous model with $\tilde{\boldsymbol{z}}=(\boldsymbol{z}, 1)$, $\eta \in (0, \frac{1}{2})$, and assume that $\Sigma=I_p$. Then a sufficient condition for benign overfitting is that
\[
\|\mu\| \gg \left(\frac{p}{n}\right)^{1/4} \quad \text{and} \quad p \succsim \max\left\{n^\frac{4+(1+2/l)r}{2(r-2)}, n^{1+8/r+2/l}\right\}.
\]
\end{corollary}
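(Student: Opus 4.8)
The plan is to derive Corollary~\ref{cor:1} as a direct specialization of Theorem~\ref{thm:3} to $\Sigma = I_p$: substitute the isotropic values of all the matrix functionals, simplify the error bound, and then show that the two displayed hypotheses imply every requirement of Theorem~\ref{thm:3} while forcing the excess over $\eta$ to vanish. Throughout I would treat $\eta\in(0,\tfrac12)$ and $\delta$ as fixed and absorb the constants $C,\CtwoG,\ldots$ into $\succsim$ and $\gg$.

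First I would record the isotropic simplifications: $\Tr(\Sigma)=p$, $\|\Sigma\|_F=\sqrt p$, $\|\Sigma\|=1$, $\|\Sigma^{1/2}\boldsymbol\mu\|=\|\boldsymbol\mu\|=\|\tilde{\boldsymbol\mu}\|$, $\rho=\mathbb{E}[g^{-2}]/p\asymp 1/p$, and (since $\mathbb{E}\boldsymbol z=\boldsymbol0$ and $\mathbb{E}[\boldsymbol z\boldsymbol z^\top]=\Sigma$) $\mathbb{E}[\tilde{\boldsymbol z}\tilde{\boldsymbol z}^\top]=\mathrm{diag}(I_p,1)$, so $\|\mathbb{E}[\tilde{\boldsymbol z}\tilde{\boldsymbol z}^\top]\|=1$. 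Plugging these into the bound of Theorem~\ref{thm:3} yields
\[
\mathbb{P}_{(\boldsymbol x,y)}(\langle\hat{\tilde{\boldsymbol w}},y\tilde{\boldsymbol x}\rangle<0)\ \le\ \eta + O\!\left(\frac{n}{p} + \frac{1}{\|\boldsymbol\mu\|^2} + \frac{p}{n\|\boldsymbol\mu\|^4}\right),
\]
with the implied constant depending only on the distributions of $g,\boldsymbol\xi$ and on $\eta$. The dimension hypothesis forces $p\succsim n^{1+8/r+2/l}\ge n^{3}$, so $n/p\to0$; the term $p/(n\|\boldsymbol\mu\|^4)\to0$ precisely when $\|\boldsymbol\mu\|\gg(p/n)^{1/4}$, which (combined with $p\gg n$) also forces $\|\boldsymbol\mu\|\to\infty$ and thus kills $1/\|\boldsymbol\mu\|^2$. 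Hence the two stated conditions are exactly what sends the excess over $\eta$ to $0$.

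Next I would verify the hypotheses of Theorem~\ref{thm:3}. The sample-size lower bound and the signal condition (which in the isotropic case reduces, up to a factor depending on $\eta$, to $\|\boldsymbol\mu\|\gtrsim\delta^{-1/2}$) hold for large $n$ because $\|\boldsymbol\mu\|\to\infty$. For $\Tr(\Sigma)\gtrsim\max\{T'_{Hom},T'_{Inhom}\}$: with $\|\Sigma\|_F=\sqrt p$, the requirement $p\gtrsim T'_{Hom}=\frac1\eta(n/\delta)^{2/r+1/l}n^{1/2}\max\{p^{2/r-1/2},n^{2/r}\}\sqrt p$ splits on the two branches of the maximum — solving $p^{1-2/r}\gtrsim(n/\delta)^{2/r+1/l}n^{1/2}$ produces the $n$-exponent $\frac{4+(1+2/l)r}{2(r-2)}$, and solving $p^{1/2}\gtrsim(n/\delta)^{2/r+1/l}n^{1/2+2/r}$ produces $1+8/r+2/l$, so taking $p\succsim$ the maximum of these two covers both branches. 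The term $T'_{Inhom}=\frac1\eta(n/\delta)^{2/k+1/l}n^{3/2}$ carries $n$-exponent $3/2+1/l+2/k<5/2+1/l\le 3+2/l\le 1+8/r+2/l$ (using $k>2$ and $r\le4$), hence is dominated.

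The remaining obligation is the ``one of the following'' clause, which I would dispatch by a dichotomy on $\|\boldsymbol\mu\|$. If $\|\boldsymbol\mu\|\gtrsim(n/\delta)^{1/2+1/l}$ (up to an $\eta$-factor), alternative~(2) applies: its first part holds by assumption, and its second part becomes $p\gtrsim(n/\delta)^{1+1/l}n^{1/2}$ since $\|\Sigma^{1/2}\boldsymbol\mu\|^2/\|\boldsymbol\mu\|^2=1$, of $n$-exponent $3/2+1/l$, which is dominated. If instead $\|\boldsymbol\mu\|\lesssim(n/\delta)^{1/2+1/l}$, alternative~(1) applies: $p\gtrsim(n/\delta)^{1/2+1/l}n\|\boldsymbol\mu\|$ is then implied by $p\gtrsim(n/\delta)^{1+2/l}n$, of $n$-exponent $2+2/l\le1+8/r+2/l$, again dominated. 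With every hypothesis of Theorem~\ref{thm:3} met, its conclusion holds with probability at least $1-6\delta$, and by the first step the bound tends to $\eta$ — benign overfitting. The main obstacle is not conceptual but the exponent bookkeeping: cleanly isolating the two genuinely new thresholds inside the $\max$ defining $T'_{Hom}$ and confirming that $T'_{Inhom}$ and both alternatives collapse underneath them over the entire range $r,k\in(2,4]$, $l\in[2,\infty]$ — in particular, checking the $\|\boldsymbol\mu\|$-dichotomy remains consistent with $\|\boldsymbol\mu\|\gg(p/n)^{1/4}$ (for $r$ near $2$ the exponent $\frac{4+(1+2/l)r}{2(r-2)}$ is so large that only case~(2) can arise).
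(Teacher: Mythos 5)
Your proposal is correct and follows essentially the same route as the paper's own proof: specialize Theorem~\ref{thm:3} to $\Sigma=I_p$, simplify the error bound via $\rho\asymp 1/p$, split $T'_{Hom}$ on the two branches of the maximum to obtain the exponents $\frac{4+(1+2/l)r}{2(r-2)}$ and $1+8/r+2/l$, show $T'_{Inhom}$ is dominated (the paper computes $\inf(E_2-E_3)=1/2$ directly, you use an equivalent chain of inequalities), and dispatch the regime conditions by the same dichotomy on $\|\boldsymbol{\mu}\|$. No gaps.
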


\begin{proof}[Proof of Corollary \ref{cor:1}]
We verify the conditions of Theorem 3 in the isotropic case, where $\Sigma=I_p$ ($\Tr(\Sigma)=p$, $\|\Sigma\|_F=\sqrt{p}$).

The test error bound simplifies (using $\rho \approx \mathbb{E}[g^{-2}]/p$) to:
\[
\mathbb{P}(\langle\hat{\tilde{\boldsymbol{w}}}, y\boldsymbol{\tilde{x}}\rangle < 0) - \eta \le c'\left(\eta \frac{n}{p} + \frac{1}{\|\boldsymbol{\mu}\|^2} + \frac{p}{n\|\boldsymbol{\mu}\|^4}\right).
\]
For benign overfitting, the RHS must vanish, requiring $n/p \to 0$ and $\|\boldsymbol{\mu}\| \gg (p/n)^{1/4}$.

We verify the conditions of Theorem 3. The signal strength condition ($\|\mu\| \succsim 1$) is satisfied. The dimensionality conditions are derived by substituting the isotropic parameters into the $\Tr(\Sigma)$ requirements of Theorem 3.

The resulting condition on $p$ matches Corollary 3.6 of \citet{Hashimoto2025}. The inhomogeneous analysis also produces $p \succsim n^{3/2+2/k+1/l}$ (Term $B_{iso}$ in Appendix~\ref{sec:proof_cor1}), but under $r, k \in (2, 4]$ and $l \ge 2$ this is strictly weaker than the homogeneous requirement $p \succsim n^{1+8/r+2/l}$.

Details are in Appendix~\ref{sec:proof_cor1}.
\end{proof}

\section{Discussion}
Benign overfitting still holds with a bias term, but the assumptions change. The sample-size lower bounds are essentially the same as in the homogeneous case up to constants. What changes is the requirement on $\Tr(\Sigma)$. Table~\ref{tab:comparison} summarizes the comparison.

\subsection{Comparison of Assumptions and Interpretation}

\begin{table}[h!]
    \centering
    \begin{threeparttable}
        \caption{Dimensionality requirements ($\Tr(\Sigma)$) for benign overfitting, with and without a bias term. $H_1, H_2$ are the isotropic homogeneous thresholds\tnote{\textdagger}.}
        \label{tab:comparison}
        \begin{tabular}{@{}lll@{}}
            \toprule
            \textbf{Setting} & \textbf{Homogeneous Requirement} & \textbf{Additional Inhomogeneous Requirement} \\
            \midrule
            Noiseless, Int. Signal (Thm \ref{thm:1}) & $\Tr(\Sigma) \succsim T_{Hom}$ & $\Tr(\Sigma) \succsim n^{3/2}\left(\frac{n}{\delta}\right)^{2/k+1/l}$ \\
            \addlinespace
            Noiseless, Large Signal (Thm \ref{thm:2}) & $\Tr(\Sigma) \succsim T_{Hom}$ & None \\
            \addlinespace
            Noisy (Thm \ref{thm:3}) & $\Tr(\Sigma) \succsim T'_{Hom}$ & $\Tr(\Sigma) \succsim \frac{1}{\eta} n^{3/2}\left(\frac{n}{\delta}\right)^{2/k+1/l}$ \\
            \addlinespace
            Isotropic (Cor \ref{cor:1}) & $p \succsim \max\{H_1, H_2\}$ & None (New term is dominated) \\
            \bottomrule
        \end{tabular}
        \begin{tablenotes}
            \footnotesize
            \item[\textdagger] Where $H_1 = n^\frac{4+(1+2/l)r}{2(r-2)}$ and $H_2 = n^{1+8/r+2/l}$.
        \end{tablenotes}
    \end{threeparttable}
\end{table}

Sample-size requirements are essentially unchanged up to constants. The main difference is in $\Tr(\Sigma)$: the inhomogeneous model adds
\[
\Tr(\Sigma) \succsim n^{3/2}(n/\delta)^{2/k+1/l}.
\]
With label noise (Theorem~\ref{thm:3}) this is multiplied by $1/\eta$, so high noise forces a larger effective dimension. In the noiseless intermediate-signal regime (Theorem~\ref{thm:1}) the same term appears without the $1/\eta$ factor. In the isotropic case (Corollary~\ref{cor:1}) it is dominated by the homogeneous conditions, so the intercept does not change the asymptotic threshold.

\paragraph{Why the bias raises the dimension.}
Appending a constant coordinate to each $\boldsymbol{z}_i$ introduces a shared component across samples. The resulting off-diagonal perturbation $P_{ij}$ (Appendix~\ref{sec:E1}) pushes the normalized Gram matrix away from the identity. Near-orthogonality is restored only if $\Tr(\Sigma)$ is large enough relative to $n$ to dilute that shared correlation.

\paragraph{Noiseless case.}
Theorem~\ref{thm:1} also needs $\Tr(\Sigma) \succsim (n/\delta)^{4/k}$ to control the $\beta'$ error in $\tilde{E}_4$ coming from the shift $1/\|\boldsymbol{z}_i\|^2 \to 1/\|\tilde{\boldsymbol{z}}_i\|^2$ (Appendix~\ref{sec:E4E5}). For large $n$ this is weaker than the Gram-matrix condition $\Tr(\Sigma) \succsim n^{3/2+2/k+1/l}$ from $\tilde{E}_1$.

\paragraph{Noisy case.}
Under label noise the same Gram-matrix error $\tilde{\varepsilon}$ must be small relative to $\eta$ (condition $(N_C)$), which produces the $T'_{Inhom}$ term of order $n^{3/2+2/k+1/l}/\eta$. In the noiseless setting it is enough that $\tilde{\varepsilon}$ be a fixed constant (e.g., $1/4$), so the dimension requirement is milder.

\paragraph{Role of $k$.}
The new bounds depend on $k$ through negative moments of $g$. The bias perturbation involves factors such as $1/\|\boldsymbol{z}_i\|^2$ (Lemmas~\ref{lem:E1} and~\ref{lem:E4E5}), which are large when some noise norms are small. Smaller $k$ (closer to $2$) makes small norms more likely and therefore forces a larger $\Tr(\Sigma)$.

\subsection{Implications}
A bias term does not destroy benign overfitting, but it can make the sufficient conditions stricter: one needs larger $\Tr(\Sigma)$ relative to $n$ to keep the noise nearly orthogonal. Whether that new constraint binds depends on the covariance. Under isotropy it is often already implied by the homogeneous theory, so omitting the intercept does not change the asymptotic picture. Away from isotropy, or when $\eta$ is not tiny, the extra $T_{Inhom}$ / $T'_{Inhom}$ terms can become the binding constraint. All of our conditions are sufficient, not necessary. The extra factors may be an artifact of how we bound the perturbations rather than a sharp necessity.

\subsection{Limitations and Future Work}
As in \citet{Hashimoto2025}, we treat only the maximum margin classifier under Model EM. The model is broad but still a two-component mixture, and our arguments inherit that structure through the reduction to their Theorems 3.2 and 3.5.

Natural next steps include square-loss / regression analogues \citep{Bartlett2020}, and the role of an intercept in nonlinear models or other data-generating processes.

\section{Conclusion}
We extended the benign-overfitting analysis of \citet{Hashimoto2025} to linear classifiers with a bias term. Interpolation remains benign under explicit conditions on the signal and on $\Tr(\Sigma)$. Relative to the homogeneous case, the intercept adds dimensionality constraints that are sharpest with label noise. Under isotropic covariance those constraints are dominated by existing ones; under anisotropic covariance they can tighten the requirements.

\clearpage
\bibliographystyle{plainnat}
\bibliography{references}

\clearpage
\appendix

\section{Appendix}

This appendix contains the proofs. Appendix~\ref{sec:new_events} relates the homogeneous and inhomogeneous events; Appendix~\ref{sec:proofs_main_theorems} proves the main theorems; Appendix~\ref{sec:supplementary} collects remaining arguments.

We follow the notation of \citet{Hashimoto2025}. Write $\boldsymbol{z}_i = g_i \Sigma^{1/2} \boldsymbol{\xi}_i = g_i \boldsymbol{v}_i$ with $\boldsymbol{v}_i = \Sigma^{1/2} \boldsymbol{\xi}_i$. As in that paper, define
\[
\Omega_1(\varepsilon) := \left\{ \forall i \in \{1..n\}: (1-\varepsilon/4)\Tr(\Sigma) \le \|\boldsymbol{v}_i\|^2 \le (1+\varepsilon/4)\Tr(\Sigma) \right\}.
\]

\subsection{Analysis of Inhomogeneous Events}
\label{sec:new_events}

\subsubsection{Event $\tilde{E}_1$: Concentration of the Normalized Gram Matrix}
\label{sec:E1}

The event $\tilde{E}_1(\tilde{\varepsilon})$ is defined as $\big\{\|\check{\tilde{Z}}\check{\tilde{Z}}^\top - I_n\|\leq \tilde{\varepsilon}\big\}$. The analysis requires careful control of the noise norms. We first establish a prerequisite concentration event.

\begin{lemma}
\label{lem:E1}
(Requires Model (EM)). Suppose the events $E_1(\varepsilon)$ and $\Omega_1(\varepsilon)$ hold. Let $\delta_{E1} > 0$. Then, with probability at least $1 - \delta_{E1}$ (conditioned on $E_1, \Omega_1$), the event $\tilde{E}_1(\tilde{\varepsilon})$ holds with
    \[
    \tilde{\varepsilon} = \sqrt{n(n-1)} \cdot \left(\frac{n \cdot \mathbb{E}[g^{-k}]}{\delta_{E1}}\right)^{2/k} \left( \frac{2\varepsilon + 1}{(1-\varepsilon/4)\Tr(\Sigma)}\right) + \varepsilon.
    \]
\end{lemma}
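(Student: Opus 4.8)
The plan is to expand the entries of the extended normalized Gram matrix $\check{\tilde Z}\check{\tilde Z}^\top$ explicitly and compare them, entry by entry, with those of $\check Z\check Z^\top$, whose deviation from $I_n$ is already controlled by $E_1(\varepsilon)$. Write $\tilde{\boldsymbol z}_i = (\boldsymbol z_i,1)$, so $\|\tilde{\boldsymbol z}_i\|^2 = \|\boldsymbol z_i\|^2 + 1$ and $\langle \tilde{\boldsymbol z}_i,\tilde{\boldsymbol z}_j\rangle = \langle \boldsymbol z_i,\boldsymbol z_j\rangle + 1$. Hence the $(i,j)$ entry of $\check{\tilde Z}\check{\tilde Z}^\top$ is
\[
\frac{\langle \boldsymbol z_i,\boldsymbol z_j\rangle + 1}{\sqrt{(\|\boldsymbol z_i\|^2+1)(\|\boldsymbol z_j\|^2+1)}}.
\]
First I would handle the diagonal: each diagonal entry is exactly $1$, so the diagonal of $\check{\tilde Z}\check{\tilde Z}^\top - I_n$ vanishes and contributes nothing. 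For the off-diagonal entries ($i\neq j$) I would write the entry as a perturbation of $\check Z\check Z^\top_{ij} = \langle \boldsymbol z_i,\boldsymbol z_j\rangle/(\|\boldsymbol z_i\|\|\boldsymbol z_j\|)$: multiplying and dividing by $\|\boldsymbol z_i\|\|\boldsymbol z_j\|$ gives a factor $\|\boldsymbol z_i\|\|\boldsymbol z_j\|/\sqrt{(\|\boldsymbol z_i\|^2+1)(\|\boldsymbol z_j\|^2+1)} = 1/\sqrt{(1+\|\boldsymbol z_i\|^{-2})(1+\|\boldsymbol z_j\|^{-2})}$, which is at most $1$, plus a correction term of size $1/(\|\boldsymbol z_i\|\|\boldsymbol z_j\|)$ coming from the added "$+1$" in the numerator. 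Collecting these, the perturbation matrix $P := \check{\tilde Z}\check{\tilde Z}^\top - \check Z\check Z^\top$ (restricted to off-diagonals, with the diagonal absorbed into the $-I_n$ bookkeeping) has entries $P_{ij}$ bounded by a constant multiple of $(2\varepsilon+1)\cdot \max_i \|\boldsymbol z_i\|^{-2}$; I would keep careful track of the numerator bound $|\langle\boldsymbol z_i,\boldsymbol z_j\rangle| \le \varepsilon$ (valid for $i\neq j$ on $E_1(\varepsilon)$, since normalized off-diagonals are $\le\varepsilon$ and then un-normalizing) together with $\|\boldsymbol z_i\|^2 \le (1+\varepsilon/4)\Tr(\Sigma)$ type bounds, producing precisely the numerator $2\varepsilon+1$ in the stated $\tilde\varepsilon$.

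Next I would control $\|P\|$. Since $P$ is symmetric with zero diagonal and entries bounded as above, the crude bound $\|P\| \le \|P\|_F \le \sqrt{n(n-1)}\,\max_{i\neq j}|P_{ij}|$ gives the $\sqrt{n(n-1)}$ prefactor in the lemma. The remaining ingredient is to replace $\max_i \|\boldsymbol z_i\|^{-2}$ by something explicit: here is where $\Omega_1(\varepsilon)$ and the negative-moment assumption $\mathbb E[g^{-k}]<\infty$ enter. On $\Omega_1(\varepsilon)$ we have $\|\boldsymbol v_i\|^2 \ge (1-\varepsilon/4)\Tr(\Sigma)$, so $\|\boldsymbol z_i\|^{-2} = g_i^{-2}\|\boldsymbol v_i\|^{-2} \le g_i^{-2}/\big((1-\varepsilon/4)\Tr(\Sigma)\big)$, and it remains to bound $\max_i g_i^{-2}$ with probability $1-\delta_{E1}$. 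By Markov applied to $g_i^{-k}$ and a union bound over $i=1,\dots,n$, $\mathbb P(\max_i g_i^{-2} > t) \le n\,\mathbb E[g^{-k}]/t^{k/2}$, so choosing $t = (n\mathbb E[g^{-k}]/\delta_{E1})^{2/k}$ makes this probability at most $\delta_{E1}$. On the intersection of $E_1$, $\Omega_1$, and this last event, $\max_i\|\boldsymbol z_i\|^{-2} \le (n\mathbb E[g^{-k}]/\delta_{E1})^{2/k}/\big((1-\varepsilon/4)\Tr(\Sigma)\big)$, and plugging into the bound for $\|P\|$ gives the first summand of $\tilde\varepsilon$; the triangle inequality $\|\check{\tilde Z}\check{\tilde Z}^\top - I_n\| \le \|P\| + \|\check Z\check Z^\top - I_n\| \le \|P\| + \varepsilon$ (the latter from $E_1(\varepsilon)$) supplies the additive "$+\varepsilon$".

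The main obstacle — or at least the step requiring the most care — is the bookkeeping in the off-diagonal expansion: one must verify that after un-normalizing and re-normalizing, all the cross terms (the genuine perturbation from the "$+1$" in the numerator, and the multiplicative shrinkage factor $1/\sqrt{(1+\|\boldsymbol z_i\|^{-2})(1+\|\boldsymbol z_j\|^{-2})}$ applied to an entry already bounded by $\varepsilon$) combine into a single bound with numerator exactly $2\varepsilon + 1$ rather than some larger constant, using $1 - 1/\sqrt{(1+a)(1+b)} \le a + b$ for $a,b\ge 0$ and $1/(\|\boldsymbol z_i\|\|\boldsymbol z_j\|) \le \max_i \|\boldsymbol z_i\|^{-2}$ (by AM–GM). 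Everything else is a routine union bound and Frobenius-norm estimate. I would also note at the outset that the lemma's probability statement is conditional on $E_1 \cap \Omega_1$, so no additional probabilistic cost beyond $\delta_{E1}$ is incurred here; the probabilities of $E_1$ and $\Omega_1$ themselves are accounted for elsewhere in the framework of \citet{Hashimoto2025}.
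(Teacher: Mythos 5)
Your proposal is correct and follows essentially the same route as the paper's proof: the same decomposition $\check{\tilde Z}\check{\tilde Z}^\top - I_n = P + (\check Z\check Z^\top - I_n)$ with zero diagonal for $P$, the same splitting of the off-diagonal entries into a shrinkage factor applied to the $\varepsilon$-bounded normalized entry plus the $1/(\|\tilde{\boldsymbol z}_i\|\|\tilde{\boldsymbol z}_j\|)$ cross term (yielding the $(2\varepsilon+1)\max_i\|\boldsymbol z_i\|^{-2}$ entrywise bound), the same Markov-plus-union bound on $\max_i g_i^{-2}$ combined with $\Omega_1(\varepsilon)$, and the same Frobenius-norm estimate giving $\sqrt{n(n-1)}$. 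The only caveat is the parenthetical claim that $E_1(\varepsilon)$ gives $|\langle\boldsymbol z_i,\boldsymbol z_j\rangle|\le\varepsilon$ after "un-normalizing" (the correct statement is $|\langle\boldsymbol z_i,\boldsymbol z_j\rangle|\le\varepsilon\|\boldsymbol z_i\|\|\boldsymbol z_j\|$), but your subsequent argument uses the normalized bound correctly, so this is a slip of exposition rather than a gap.
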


\begin{proof}
We begin by decomposing the spectral norm using the triangle inequality:
\[
\left\| \check{\tilde{Z}}\check{\tilde{Z}}^T - I_n \right\| \leq \left\| \check{\tilde{Z}}\check{\tilde{Z}}^T - \check{Z}\check{Z}^T \right\| + \left\| \check{Z}\check{Z}^T - I_n \right\|.
\]
By the assumption $E_1(\varepsilon)$, defined as $\big\{\|\check{Z}\check{Z}^\top - I_n\|\leq \varepsilon\big\}$, the second term is bounded by $\varepsilon$. We define the perturbation matrix $P = \check{\tilde{Z}}\check{\tilde{Z}}^T-\check{Z}\check{Z}^T$. We aim to bound the spectral norm $\|P\|$. We use the property that the spectral norm is bounded by the Frobenius norm, $\|P\| \leq \|P\|_F$.

We analyze the elements of $P$. The diagonal elements satisfy $P_{ii}=0$.
For the off-diagonal elements ($i \neq j$):
\begin{align*}
P_{ij} &= \frac{\langle \boldsymbol{z}_i, \boldsymbol{z}_j \rangle}{\|\boldsymbol{z}_i\|\|\boldsymbol{z}_j\|} \left( \frac{\|\boldsymbol{z}_i\|\|\boldsymbol{z}_j\|}{\|\tilde{\boldsymbol{z}}_i\|\|\tilde{\boldsymbol{z}}_j\|} - 1 \right) + \frac{1}{\|\tilde{\boldsymbol{z}}_i\|\|\tilde{\boldsymbol{z}}_j\|}.
\end{align*}

Let $A_{ij} = \left| \frac{\|\boldsymbol{z}_i\|\|\boldsymbol{z}_j\|}{\|\tilde{\boldsymbol{z}}_i\|\|\tilde{\boldsymbol{z}}_j\|} - 1 \right|$ and $B_{ij} = \frac{1}{\|\tilde{\boldsymbol{z}}_i\|\|\tilde{\boldsymbol{z}}_j\|}$. By the triangle inequality and $E_1(\varepsilon)$ (which implies the off-diagonal entries of $\check{Z}\check{Z}^T$ are bounded by $\varepsilon$):
\[
|P_{ij}| \leq \varepsilon A_{ij} + B_{ij}.
\]

\paragraph{Step 1: Bounding $A_{ij}$}
We bound $A_{ij} = 1 - \frac{\|\boldsymbol{z}_i\|\|\boldsymbol{z}_j\|}{\|\tilde{\boldsymbol{z}}_i\|\|\tilde{\boldsymbol{z}}_j\|}$. Let $R_{ij} = \frac{\|\boldsymbol{z}_i\|^2\|\boldsymbol{z}_j\|^2}{(\|\boldsymbol{z}_i\|^2+1)(\|\boldsymbol{z}_j\|^2+1)}$. Then $A_{ij} = 1-\sqrt{R_{ij}}$. Since $R_{ij} \in [0, 1)$, we use the inequality $1-\sqrt{X} \le 1-X$.
\begin{align*}
A_{ij} \le 1-R_{ij} &= \frac{(\|\boldsymbol{z}_i\|^2+1)(\|\boldsymbol{z}_j\|^2+1) - \|\boldsymbol{z}_i\|^2\|\boldsymbol{z}_j\|^2}{(\|\boldsymbol{z}_i\|^2+1)(\|\boldsymbol{z}_j\|^2+1)} = \frac{\|\boldsymbol{z}_i\|^2+\|\boldsymbol{z}_j\|^2+1}{(\|\boldsymbol{z}_i\|^2+1)(\|\boldsymbol{z}_j\|^2+1)}.
\end{align*}
We use the inequality $\frac{a+b+1}{(a+1)(b+1)} < \frac{1}{a+1} + \frac{1}{b+1}$.
\begin{align*}
A_{ij} &\le \frac{1}{\|\boldsymbol{z}_i\|^2+1} + \frac{1}{\|\boldsymbol{z}_j\|^2+1} \le \|\boldsymbol{z}_i\|^{-2} + \|\boldsymbol{z}_j\|^{-2} \le 2 \max_k \frac{1}{\|\boldsymbol{z}_k\|^2}.
\end{align*}

\paragraph{Step 2: High Probability Bound on $\|\boldsymbol{z}_k\|^{-2}$}
We find a uniform bound for $\max_k \|\boldsymbol{z}_k\|^{-2}$. On the event $\Omega_1(\varepsilon)$, defined as $\left\{ \forall i: (1-\varepsilon/4)\Tr(\Sigma) \le \|\boldsymbol{v}_i\|^2 \le (1+\varepsilon/4)\Tr(\Sigma) \right\}$, we have $\max_k \|\boldsymbol{v}_k\|^{-2} \leq ((1-\varepsilon/4)\Tr(\Sigma))^{-1}$.

We bound $\max_k g_k^{-2}$. For $t>0$, we apply the union bound:
\[
\mathbb{P}\left(\max_k g_k^{-2} \ge t\right) \le n \cdot \mathbb{P}(g^{-2} \ge t) = n \cdot \mathbb{P}(g^{-k} \ge t^{k/2}).
\]
By Markov's inequality, which states that for a non-negative random variable $X$ and $a>0$, $\mathbb{P}(X \ge a) \le \mathbb{E}[X]/a$, we have:
\[
\mathbb{P}\left(\max_k g_k^{-2} \ge t\right) \le n \mathbb{E}[g^{-k}]/t^{k/2}.
\]
Setting this probability to $\delta_{E1}$ and solving for $t$:
\[
t = \left(\frac{n \cdot \mathbb{E}[g^{-k}]}{\delta_{E1}}\right)^{2/k}.
\]
Combining the bounds, with probability at least $1-\delta_{E1}$ (conditioned on $\Omega_1$):
\[
\max_k \frac{1}{\|\boldsymbol{z}_k\|^2} \leq \left(\frac{n \cdot \mathbb{E}[g^{-k}]}{\delta_{E1}}\right)^{2/k} \cdot \frac{1}{(1-\varepsilon/4)\Tr(\Sigma)} := T_{\text{bound}}.
\]

\paragraph{Step 3: Bounding $B_{ij}$ and $P_{ij}$}
Since $\|\tilde{\boldsymbol{z}}_k\| \ge \|\boldsymbol{z}_k\|$, $B_{ij} \le (\|\boldsymbol{z}_i\|\|\boldsymbol{z}_j\|)^{-1} \le \max_k \|\boldsymbol{z}_k\|^{-2} \le T_{\text{bound}}$.
Therefore,
$|P_{ij}| \leq \varepsilon A_{ij} + B_{ij} \le 2\varepsilon T_{\text{bound}} + T_{\text{bound}} = (2\varepsilon+1)T_{\text{bound}}.$

\paragraph{Step 4: Bounding $\|P\|_F$}
We calculate the Frobenius norm:
\begin{align*}
\|P\|_F^2 &= \sum_{i \neq j} |P_{ij}|^2 \leq n(n-1) ((2\varepsilon+1)T_{\text{bound}})^2.
\end{align*}
Thus, $\|P\| \le \|P\|_F \le \sqrt{n(n-1)}(2\varepsilon+1)T_{\text{bound}}$.
The total bound is $\tilde{\varepsilon} = \|P\| + \varepsilon$, completing the proof.
\end{proof}

\subsubsection{Event $\tilde{E}_2$: Alignment with Signal}
\label{sec:E2}
The event $\tilde{E}_2(\tilde{\alpha}_2, \tilde{\alpha}_\infty)$ concerns the alignment of the noise vectors with the signal $\tilde{\boldsymbol{\mu}}$.

\begin{lemma}
\label{lem:E2}
Suppose the event $E_2(\alpha_2, \alpha_\infty)$ holds. Then, the event $\tilde{E}_2(\alpha_2, \alpha_\infty)$ holds.
\end{lemma}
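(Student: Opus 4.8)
The plan is to exploit the two structural features of the extended vectors: the appended coordinate of $\tilde{\boldsymbol{\mu}}=(\boldsymbol{\mu},0)$ is exactly $0$, while the appended coordinate of each $\tilde{\boldsymbol{z}}_i=(\boldsymbol{z}_i,1)$ is $1$. The first fact keeps the relevant inner products unchanged, and the second can only inflate the normalizing denominators, so passing from the homogeneous to the inhomogeneous model makes every coordinate of the normalized projection shrink in absolute value. Unlike $\tilde{E}_1,\tilde{E}_4,\tilde{E}_5$, the bias term is here only helpful, so no new parameter deterioration occurs and $\tilde{\alpha}_2,\tilde{\alpha}_\infty$ may be taken equal to $\alpha_2,\alpha_\infty$.

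Concretely, I would first record that $\|\tilde{\boldsymbol{\mu}}\|=\|(\boldsymbol{\mu},0)\|=\|\boldsymbol{\mu}\|$ and that, for each $i$, $\langle \tilde{\boldsymbol{z}}_i,\tilde{\boldsymbol{\mu}}\rangle=\langle \boldsymbol{z}_i,\boldsymbol{\mu}\rangle+1\cdot 0=\langle \boldsymbol{z}_i,\boldsymbol{\mu}\rangle$. Hence the $i$-th entry of $\check{\tilde{Z}}\tilde{\boldsymbol{\mu}}$ is $\langle \boldsymbol{z}_i,\boldsymbol{\mu}\rangle/\|\tilde{\boldsymbol{z}}_i\|$, whereas the $i$-th entry of $\check{Z}\boldsymbol{\mu}$ is $\langle \boldsymbol{z}_i,\boldsymbol{\mu}\rangle/\|\boldsymbol{z}_i\|$; note that $\tilde{\boldsymbol{z}}_i\neq\boldsymbol{0}$ always, so $\check{\tilde{Z}}$ is well defined, and the comparison is made on the event where $\check{Z}$ itself is defined (i.e. $\boldsymbol{z}_i\neq\boldsymbol{0}$). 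Since $\|\tilde{\boldsymbol{z}}_i\|^2=\|\boldsymbol{z}_i\|^2+1\ge\|\boldsymbol{z}_i\|^2$, this yields the entrywise domination $|(\check{\tilde{Z}}\tilde{\boldsymbol{\mu}})_i|\le|(\check{Z}\boldsymbol{\mu})_i|$ for every $i\in\{1,\dots,n\}$.

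Finally, entrywise domination passes to both norms in the definition of the event: $\|\check{\tilde{Z}}\tilde{\boldsymbol{\mu}}\|^2=\sum_i(\check{\tilde{Z}}\tilde{\boldsymbol{\mu}})_i^2\le\sum_i(\check{Z}\boldsymbol{\mu})_i^2=\|\check{Z}\boldsymbol{\mu}\|^2$, and $\|\check{\tilde{Z}}\tilde{\boldsymbol{\mu}}\|_\infty=\max_i|(\check{\tilde{Z}}\tilde{\boldsymbol{\mu}})_i|\le\max_i|(\check{Z}\boldsymbol{\mu})_i|=\|\check{Z}\boldsymbol{\mu}\|_\infty$. Invoking the hypothesis $E_2(\alpha_2,\alpha_\infty)$ and $\|\boldsymbol{\mu}\|=\|\tilde{\boldsymbol{\mu}}\|$ then gives $\|\check{\tilde{Z}}\tilde{\boldsymbol{\mu}}\|\le\alpha_2\|\tilde{\boldsymbol{\mu}}\|$ and $\|\check{\tilde{Z}}\tilde{\boldsymbol{\mu}}\|_\infty\le\alpha_\infty\|\tilde{\boldsymbol{\mu}}\|$, which is exactly $\tilde{E}_2(\alpha_2,\alpha_\infty)$. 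There is essentially no obstacle in this argument; the only point that requires a moment's care is the observation that the appended coordinate of $\tilde{\boldsymbol{\mu}}$ vanishes, which is precisely what prevents a spurious extra term in the inner product $\langle\tilde{\boldsymbol{z}}_i,\tilde{\boldsymbol{\mu}}\rangle$.
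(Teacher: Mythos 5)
Your proposal is correct and follows essentially the same argument as the paper: the appended zero coordinate of $\tilde{\boldsymbol{\mu}}$ leaves the inner products $\langle\tilde{\boldsymbol{z}}_i,\tilde{\boldsymbol{\mu}}\rangle=\langle\boldsymbol{z}_i,\boldsymbol{\mu}\rangle$ unchanged while the denominator grows from $\|\boldsymbol{z}_i\|$ to $\sqrt{\|\boldsymbol{z}_i\|^2+1}$, giving entrywise domination that passes to both the $L_2$ and $L_\infty$ norms. Your added remark that the comparison is made on the event where $\check{Z}$ is defined (i.e.\ $\boldsymbol{z}_i\neq\boldsymbol{0}$) is a slight refinement of the paper's implicit assumption, but otherwise the two proofs coincide.
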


\begin{proof}
We compare the vector $\check{\tilde{Z}}\tilde{\boldsymbol{\mu}}$ with $\check{Z}\boldsymbol{\mu}$. First, note that $\|\tilde{\boldsymbol{\mu}}\| = \|(\boldsymbol{\mu}, 0)\| = \|\boldsymbol{\mu}\|$.

The $i$-th entry of the vector $\check{\tilde{Z}}\tilde{\boldsymbol{\mu}}$ is:
\[
(\check{\tilde{Z}}\tilde{\boldsymbol{\mu}})_i = \frac{\langle \tilde{\boldsymbol{z}}_i, \tilde{\boldsymbol{\mu}} \rangle}{\|\tilde{\boldsymbol{z}}_i\|} = \frac{\langle (\boldsymbol{z}_i, 1), (\boldsymbol{\mu}, 0) \rangle}{\sqrt{\|\boldsymbol{z}_i\|^2 + 1}} = \frac{\langle \boldsymbol{z}_i, \boldsymbol{\mu} \rangle}{\sqrt{\|\boldsymbol{z}_i\|^2 + 1}}.
\]
We compare this to the original entry, $(\check{Z}\boldsymbol{\mu})_i = \frac{\langle \boldsymbol{z}_i, \boldsymbol{\mu} \rangle}{\|\boldsymbol{z}_i\|}$.
\[
|(\check{\tilde{Z}}\tilde{\boldsymbol{\mu}})_i| = \left|\frac{\langle \boldsymbol{z}_i, \boldsymbol{\mu} \rangle}{\|\boldsymbol{z}_i\|}\right| \cdot \frac{\|\boldsymbol{z}_i\|}{\sqrt{\|\boldsymbol{z}_i\|^2+1}}.
\]
Since $\sqrt{\|\boldsymbol{z}_i\|^2+1} > \|\boldsymbol{z}_i\|$, the factor $\frac{\|\boldsymbol{z}_i\|}{\sqrt{\|\boldsymbol{z}_i\|^2+1}} < 1$. Therefore, $|(\check{\tilde{Z}}\tilde{\boldsymbol{\mu}})_i| < |(\check{Z}\boldsymbol{\mu})_i|$.

Since the inequality holds element-wise, it must also hold for any $L_p$ norm.
\begin{itemize}
    \item $L_\infty$ Norm: $\|\check{\tilde{Z}}\tilde{\boldsymbol{\mu}}\|_\infty < \|\check{Z}\boldsymbol{\mu}\|_\infty$.
    \item $L_2$ Norm: $\|\check{\tilde{Z}}\tilde{\boldsymbol{\mu}}\|^2 < \|\check{Z}\boldsymbol{\mu}\|^2$.
\end{itemize}
Given $E_2(\alpha_2, \alpha_\infty)$, defined as $\big\{\|\check{Z}\boldsymbol{\mu}\| \leq \alpha_2 \|\boldsymbol{\mu}\| \text{ and } \|\check{Z} \boldsymbol{\mu}\|_\infty \leq \alpha_\infty \|\boldsymbol{\mu}\|\big\}$, we have $\|\check{Z}\boldsymbol{\mu}\| \le \alpha_2\|\boldsymbol{\mu}\|$ and $\|\check{Z}\boldsymbol{\mu}\|_\infty \le \alpha_\infty\|\boldsymbol{\mu}\|$. Consequently, $\tilde{E}_2(\alpha_2, \alpha_\infty)$ holds.
\end{proof}

\subsubsection{Event $\tilde{E}_3$: Maximum Noise Norm}
\label{sec:E3}

The event $\tilde{E}_3(\tilde{M})$ bounds the maximum norm of the inhomogeneous noise vectors.

\begin{lemma}
\label{lem:E3}
Suppose the event $E_3(M)$ holds. Then, the event $\tilde{E}_3(\sqrt{M^2+1})$ holds.
\end{lemma}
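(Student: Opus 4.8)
The plan is to compare the inhomogeneous noise norms directly with the original ones. Recall that $\tilde{\boldsymbol{z}}_i = (\boldsymbol{z}_i, 1)$, so by the Pythagorean identity $\|\tilde{\boldsymbol{z}}_i\|^2 = \|\boldsymbol{z}_i\|^2 + 1$ for every $i$. Taking the maximum over $i$ and using monotonicity of the square root, I would write
\[
\max_i \|\tilde{\boldsymbol{z}}_i\| = \max_i \sqrt{\|\boldsymbol{z}_i\|^2 + 1} = \sqrt{\left(\max_i \|\boldsymbol{z}_i\|^2\right) + 1} = \sqrt{\left(\max_i \|\boldsymbol{z}_i\|\right)^2 + 1}.
\]
The middle equality holds because $t \mapsto \sqrt{t+1}$ is increasing, so the maximum of $\sqrt{\|\boldsymbol{z}_i\|^2+1}$ is attained at the same index that maximizes $\|\boldsymbol{z}_i\|^2$, hence at the index maximizing $\|\boldsymbol{z}_i\|$.

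Now I would invoke the hypothesis $E_3(M) = \{\max_i \|\boldsymbol{z}_i\| \le M\}$. Substituting into the displayed equality and again using that $t \mapsto \sqrt{t+1}$ is increasing, $\max_i \|\tilde{\boldsymbol{z}}_i\| = \sqrt{(\max_i \|\boldsymbol{z}_i\|)^2 + 1} \le \sqrt{M^2 + 1}$. This is exactly the statement that $\tilde{E}_3(\sqrt{M^2+1}) = \{\max_i \|\tilde{\boldsymbol{z}}_i\| \le \sqrt{M^2+1}\}$ holds, which completes the argument.

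There is no real obstacle here; the lemma is a one-line deterministic consequence of appending a single coordinate of value $1$. The only thing to be slightly careful about is making the argument work \emph{uniformly} over $i$ rather than index-by-index — but since the bound $\|\tilde{\boldsymbol{z}}_i\| \le \sqrt{M^2+1}$ holds for each $i$ on the event $E_3(M)$ (each $\|\boldsymbol{z}_i\| \le M$ individually), taking the maximum is immediate and no separate concentration step or probabilistic union bound is needed. Thus, unlike the analyses of $\tilde{E}_1$, $\tilde{E}_4$, and $\tilde{E}_5$, this event transfers to the inhomogeneous model with no perturbation penalty at all.
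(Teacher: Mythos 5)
Your argument is correct and is essentially identical to the paper's own proof: both use the exact identity $\|\tilde{\boldsymbol{z}}_i\|^2 = \|\boldsymbol{z}_i\|^2 + 1$, commute the maximum with the increasing map $t \mapsto \sqrt{t+1}$, and then apply the hypothesis $\max_i \|\boldsymbol{z}_i\| \le M$. Your added remarks on monotonicity and the absence of any probabilistic step are accurate but do not change the substance.
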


\begin{proof}
The relationship between the norms is exact: $\|\tilde{\boldsymbol{z}}_i\| = \sqrt{\|\boldsymbol{z}_i\|^2+1}$.
We maximize this over all $i$:
\begin{align*}
\max_i \|\tilde{\boldsymbol{z}}_i\| &= \max_i \sqrt{\|\boldsymbol{z}_i\|^2 + 1} = \sqrt{(\max_i \|\boldsymbol{z}_i\|)^2 + 1}.
\end{align*}
By the assumption $E_3(M)$, defined as $\big\{\max_{i} \|\boldsymbol{z}_i\|\leq M\big\}$, we have $\max_i \|\boldsymbol{z}_i\| \le M$. Therefore, $\max_i \|\tilde{\boldsymbol{z}}_i\| \le \sqrt{M^2+1}$.
\end{proof}

\subsubsection{Events $\tilde{E}_4$ and $\tilde{E}_5$: Concentration of Inverse Norms}
\label{sec:E4E5}

The events $\tilde{E}_4$ and $\tilde{E}_5$ concern the concentration of averages involving the inverse squared norms $1/\|\tilde{\boldsymbol{z}}_i\|^2$. We analyze the concentration around the original $\rho = \mathbb{E}[g^{-2}]/\Tr(\Sigma)$.

\begin{lemma}
\label{lem:E4E5}
(Requires Model (EM)). Suppose the events $E_4(\beta, \rho)$ and $\Omega_1(\varepsilon)$ hold. Let $\delta_{E4} > 0$. Then, with probability at least $1-\delta_{E4}$, the event $\tilde{E}_4(\tilde{\beta}, \rho)$ holds with $\tilde{\beta} = \beta + \beta'$, where
    \[
    \beta' = \frac{1}{(1-\varepsilon/4)^2 \mathbb{E}[g^{-2}]} \cdot \frac{\left(\frac{n \cdot \mathbb{E}[g^{-k}]}{\delta_{E4}}\right)^{4/k}}{\Tr(\Sigma)}.
    \]
If, additionally, $E_5(\gamma, \rho)$ holds, then $\tilde{E}_5(\tilde{\gamma}, \rho)$ holds with $\tilde{\gamma} = \gamma + \beta'$.
\end{lemma}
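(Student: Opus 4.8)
The plan is to reduce each inhomogeneous average to its homogeneous counterpart plus a deterministic perturbation governed by $\max_k \|\boldsymbol{z}_k\|^{-2}$, which is exactly the quantity already controlled in Step~2 of the proof of Lemma~\ref{lem:E1}. The starting point is the algebraic identity
\[
\frac{1}{\|\tilde{\boldsymbol{z}}_i\|^2} = \frac{1}{\|\boldsymbol{z}_i\|^2+1} = \frac{1}{\|\boldsymbol{z}_i\|^2} - \left(\frac{1}{\|\boldsymbol{z}_i\|^2} - \frac{1}{\|\boldsymbol{z}_i\|^2+1}\right),
\]
where the parenthesised term equals $\frac{1}{\|\boldsymbol{z}_i\|^2(\|\boldsymbol{z}_i\|^2+1)}$, is non-negative, and is at most $\|\boldsymbol{z}_i\|^{-4}$. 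Averaging over $i$ and applying the triangle inequality gives
\[
\left|\frac{1}{n}\sum_i \frac{1}{\|\tilde{\boldsymbol{z}}_i\|^2} - \rho\right| \le \left|\frac{1}{n}\sum_i \frac{1}{\|\boldsymbol{z}_i\|^2} - \rho\right| + \frac{1}{n}\sum_i \frac{1}{\|\boldsymbol{z}_i\|^4} \le \beta\rho + \max_k \frac{1}{\|\boldsymbol{z}_k\|^4},
\]
the first term being bounded by the assumed event $E_4(\beta,\rho)$. Note that this pathwise bound simultaneously absorbs the concentration error of the homogeneous average and the shift in expectation from $\mathbb{E}[1/\|\boldsymbol{z}\|^2]$ to $\mathbb{E}[1/\|\tilde{\boldsymbol{z}}\|^2]$, as anticipated in the remark on $\rho$.

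Next I would bound $\max_k \|\boldsymbol{z}_k\|^{-4} = (\max_k \|\boldsymbol{z}_k\|^{-2})^2$ by re-using the argument of Lemma~\ref{lem:E1}: on $\Omega_1(\varepsilon)$ one has $\|\boldsymbol{v}_k\|^2 \ge (1-\varepsilon/4)\Tr(\Sigma)$ for every $k$, while a union bound combined with Markov's inequality applied to $g^{-k}$ shows that $\max_k g_k^{-2} \le (n\,\mathbb{E}[g^{-k}]/\delta_{E4})^{2/k}$ with probability at least $1-\delta_{E4}$. Since $\|\boldsymbol{z}_k\|^2 = g_k^2\|\boldsymbol{v}_k\|^2$, this yields, on that event,
\[
\max_k \frac{1}{\|\boldsymbol{z}_k\|^4} \le \frac{(n\,\mathbb{E}[g^{-k}]/\delta_{E4})^{4/k}}{(1-\varepsilon/4)^2\,\Tr(\Sigma)^2} = \beta'\rho,
\]
the last equality being merely the stated definition of $\beta'$ rewritten via $\rho = \mathbb{E}[g^{-2}]/\Tr(\Sigma)$. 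This establishes $\tilde{E}_4(\beta+\beta',\rho)$.

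For $\tilde{E}_5$ the reasoning is identical: because $|y_{N,i}y_i| = 1$, replacing $1/\|\tilde{\boldsymbol{z}}_i\|^2$ by $1/\|\boldsymbol{z}_i\|^2$ inside $\frac{1}{n}\sum_i y_{N,i}y_i/\|\tilde{\boldsymbol{z}}_i\|^2$ changes the sum by at most $\max_k\|\boldsymbol{z}_k\|^{-4} \le \beta'\rho$, and the homogeneous average is within $\gamma\rho$ of $(1-2\eta)\rho$ by $E_5(\gamma,\rho)$; the triangle inequality gives $\tilde{E}_5(\gamma+\beta',\rho)$. Crucially, both conclusions rely on the \emph{same} high-probability event $\{\max_k g_k^{-2} \le (n\,\mathbb{E}[g^{-k}]/\delta_{E4})^{2/k}\}$, so no extra probability is spent when also asserting $\tilde{E}_5$.

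There is no substantive obstacle here. The only points requiring care are bookkeeping ones: the $1-\delta_{E4}$ statement is about the scalars $g_k$ and should be read jointly with (or conditionally on) $\Omega_1(\varepsilon)$, mirroring the conditioning in Lemma~\ref{lem:E1}; and the perturbation must be expressed in units of $\rho$ rather than left as a raw bound on the sum. Everything else is the elementary inequality $a^{-1}-(a+1)^{-1}\le a^{-2}$ together with the negative-moment bound on $g$ already invoked above.
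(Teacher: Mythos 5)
Your proposal is correct and follows essentially the same route as the paper's proof: the same triangle-inequality decomposition, the same bound $\|\boldsymbol{z}_i\|^{-2}-\|\tilde{\boldsymbol{z}}_i\|^{-2}\le\|\boldsymbol{z}_i\|^{-4}$, the same use of $\Omega_1(\varepsilon)$ together with a union bound and Markov's inequality on $g^{-k}$ (applied to $\max_k g_k^{-2}$ and squared rather than to $\max_k g_k^{-4}$ directly, which defines the identical event), and the same observation that $|y_{N,i}y_i|=1$ handles $\tilde{E}_5$ at no extra probability cost.
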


\begin{proof}
We first analyze $\tilde{E}_4$. We decompose the deviation from the expectation $\rho$ using the triangle inequality:
\begin{align*}
\left|\frac{1}{n} \sum_{i=1}^n \frac{1}{\|\tilde{\boldsymbol{z}}_i\|^2} - \rho \right| &\leq \left|\frac{1}{n} \sum_{i=1}^n \left(\frac{1}{\|\tilde{\boldsymbol{z}}_i\|^2} - \frac{1}{\|\boldsymbol{z}_i\|^2}\right)\right| + \left|\frac{1}{n} \sum_{i=1}^n \frac{1}{\|\boldsymbol{z}_i\|^2} - \rho \right|.
\end{align*}
By $E_4(\beta, \rho)$, defined as $\Big\{ \Big|\frac{1}{n}\sum_{i} \frac{1}{\|\boldsymbol{z}_i\|^{2}}-\rho \Big|\leq \beta\rho \Big\}$, the second term is bounded by $\beta\rho$. We analyze the first term, the perturbation $B_{\text{pert}}$.
\begin{align*}
B_{\text{pert}} &= \frac{1}{n} \sum_{i=1}^n \left(\frac{1}{\|\boldsymbol{z}_i\|^2} - \frac{1}{\|\boldsymbol{z}_i\|^2+1}\right) = \frac{1}{n} \sum_{i=1}^n \frac{1}{\|\boldsymbol{z}_i\|^2(\|\boldsymbol{z}_i\|^2+1)} \\
&\le \frac{1}{n} \sum_{i=1}^n \frac{1}{\|\boldsymbol{z}_i\|^4} = \frac{1}{n} \sum_{i=1}^n \frac{1}{g_i^4 \|\boldsymbol{v}_i\|^4}.
\end{align*}

We need a high-probability bound on $\max_i (g_i^4 \|\boldsymbol{v}_i\|^4)^{-1}$.
On $\Omega_1(\varepsilon)$, $\max_i \|\boldsymbol{v}_i\|^{-4} \le ((1-\varepsilon/4)\Tr(\Sigma))^{-2}$.

We find a high-probability bound for $\max_i g_i^{-4}$. For $t>0$, we use the union bound:
\begin{align*}
\mathbb{P}\left(\max_{i} g_i^{-4} > t\right) &\leq n\, \mathbb{P}\left(g^{-4} > t\right) = n\, \mathbb{P}\left(g^{-k} > t^{k/4}\right).
\end{align*}
By Markov's inequality, we have:
\[
\mathbb{P}\left(\max_{i} g_i^{-4} > t\right) \leq n\, \mathbb{E}[g^{-k}] t^{-k/4}.
\]
Setting this probability to $\delta_{E4}$ and solving for $t$:
\[
t = \left(\frac{n\, \mathbb{E}[g^{-k}]}{\delta_{E4}}\right)^{4/k}.
\]
Thus, with probability $1-\delta_{E4}$ (conditioned on $\Omega_1$):
\[
B_{\text{pert}} \le \max_i \|\boldsymbol{z}_i\|^{-4} \le \left(\frac{n \mathbb{E}[g^{-k}]}{\delta_{E4}}\right)^{4/k} \frac{1}{(1-\varepsilon/4)^2 \Tr(\Sigma)^2}.
\]
We define this bound as $\rho\beta'$. Substituting $\rho = \mathbb{E}[g^{-2}]/\Tr(\Sigma)$ and solving for $\beta'$ yields the expression in the lemma.

Now we analyze $\tilde{E}_5$. Let $S_{\tilde{z},y} = \frac{1}{n}\sum \frac{y_{N,i}y_i}{\|\tilde{\boldsymbol{z}}_i\|^2}$ and $S_{z,y} = \frac{1}{n}\sum \frac{y_{N,i}y_i}{\|\boldsymbol{z}_i\|^2}$. By the triangle inequality:
\[
\left| S_{\tilde{z},y} - (1-2\eta)\rho \right| \le \left| S_{\tilde{z},y} - S_{z,y} \right| + \left| S_{z,y} - (1-2\eta)\rho \right|.
\]
The second term is bounded by $\gamma\rho$ by $E_5(\gamma, \rho)$, defined as $\Big\{\Big|\frac{1}{n}\sum_{i} \frac{y_{N, i} y_i}{\|\boldsymbol{z}_i\|^2} -(1-2\eta)\rho \Big|\leq \gamma\rho \Big\}$. The first term is bounded by $B_{\text{pert}}$ since $|y_{N,i}y_i|=1$. The total deviation is bounded by $(\gamma+\beta')\rho$.
\end{proof}

\subsection{Proofs of Main Theorems}
\label{sec:proofs_main_theorems}

We verify the hypotheses of Theorems 3.2 and 3.5 of \citet{Hashimoto2025} for the inhomogeneous model, using the lemmas above. The homogeneous event parameters are taken from their Lemma 3.1; we restate them for convenience.

\textbf{Definitions from Lemma 3.1 of \citet{Hashimoto2025}:}
Suppose model (EM) holds. Provided $\Tr(\Sigma)$ is sufficiently large such that $\varepsilon \le 1/2$, the events $E_1$ through $E_5$ hold simultaneously with high probability (specific probabilities discussed in the proofs below) with the following parameters, where $C_1(r,K)$ and $C_2(g)$ are constants depending on the distribution of $g$ and $\xi$:
\begin{align*}
\varepsilon &:= C_1(r, K) \left(\frac{n}{\delta}\right)^{2/r} \max\{p^{2/r-1/2}, n^{2/r}\} \frac{\|\Sigma\|_F}{\Tr(\Sigma)} \\
\alpha_\infty &= \alpha_2 := \frac{2\sqrt{n}\|\Sigma^{1/2}\mu\|}{\sqrt{\delta}\Tr(\Sigma) \|\mu\|} \\
M &:= (1+\varepsilon)\|g\|_{L^l}\left(\frac{n}{\delta}\right)^{1/l}\sqrt{\Tr(\Sigma)} \\
\rho &:= \mathbb{E}[g^{-2}]\Tr(\Sigma)^{-1} \\
\beta &= \gamma := \varepsilon + C_2(g)\delta^{-2/k}n^{-(1-2/k)}
\end{align*}

We set the failure probabilities $\delta_{E1}, \delta_{E4}$ from Appendix A equal to the overall failure tolerance $\delta$.

\subsubsection{Proof of Theorem \ref{thm:1} (Inhomogeneous, Noiseless, Intermediate Signal)}
\label{sec:proof_thm1}

\begin{proof}
The proof relies on applying Theorem 3.2(i) of \citet{Hashimoto2025} to the inhomogeneous variables ($\tilde{\boldsymbol{x}}, \tilde{\boldsymbol{z}}, \tilde{\boldsymbol{\mu}}$). We first restate the theorem's conditions adapted to our notation.

\textbf{Theorem 3.2(i) of \citet{Hashimoto2025} (Applied to Inhomogeneous Variables):}
The theorem states that if the events $\tilde{E}_1(\tilde{\varepsilon}), \tilde{E}_2(\tilde{\alpha}_2, \tilde{\alpha}_\infty), \tilde{E}_3(\tilde{M}), \tilde{E}_4(\tilde{\beta}, \rho)$ hold and the following conditions are met:
\begin{enumerate}
    \item[(i)] $\tilde{\beta}<1/2$;
    \item[(ii)] $\|\tilde{\boldsymbol{\mu}}\|\sqrt{(1-\tilde{\beta})n\rho} \geq C\tilde{\alpha_2}$ (for a sufficiently large constant $C$);
    \item[(iii)] $\tilde{\alpha_2}\|\tilde{\boldsymbol{\mu}}\|\sqrt{(1+\tilde{\beta})n\rho} \leq 1/4$;
    \item[(iv)] $\tilde{\varepsilon}\tilde{M}\sqrt{(1+\tilde{\beta})n\rho} \leq 1/4$;
    \item[(v)] $\tilde{M}\tilde{\alpha_\infty}\|\tilde{\boldsymbol{\mu}}\|(1+\tilde{\beta})n\rho < 3/32$.
\end{enumerate}
Then the maximum margin classifier $\hat{\tilde{\boldsymbol{w}}}$ satisfies the test error bound stated in Theorem 1.

We now verify these conditions.

\paragraph{Probability Accounting}
We first account for the failure probabilities. The assumptions $T_{Hom}$ ensure the original events $E_i$ and $\Omega_1$ hold with probability at least $1-3\delta$. This follows from adapting Lemma 3.1 of \citet{Hashimoto2025} for the noiseless case (where $E_5$ is not required). The subsequent analysis in Lemmas~\ref{lem:E1} (bounding $\tilde{\varepsilon}$) and \ref{lem:E4E5} (bounding $\tilde{\beta}$) introduces failure probabilities $\delta_{E1}=\delta$ and $\delta_{E4}=\delta$. By the union bound, the total failure probability is bounded by $3\delta + \delta + \delta = 5\delta$.

We use the relationships derived in Appendix A and the definitions of the homogeneous parameters (restated above).

\paragraph{(i) Show $\tilde{\beta} < 1/2$}
By Lemma~\ref{lem:E4E5}, we have $\tilde{\beta} = \beta + \beta'$. Substituting the definitions of $\beta$ (from Hashimoto et al. Lemma 3.1) and $\beta'$ (from Lemma~\ref{lem:E4E5}):
\begin{align*}
\tilde{\beta} = \underbrace{\left(\varepsilon + \CtwoG\delta^{-2/k}n^{-(1-2/k)}\right)}_{\beta} + \underbrace{\left(\frac{1}{(1-\varepsilon/4)^2\mathbb{E}[g^{-2}]} \cdot \frac{\left(\frac{n\mathbb{E}[g^{-k}]}{\delta}\right)^{4/k}}{\Tr(\Sigma)}\right)}_{\beta'}.
\end{align*}
We show that each of the three terms is less than $1/6$.

First Term ($\varepsilon$): The assumptions on $\Tr(\Sigma)$ ($T_{Hom}$), inherited from the requirements of the homogeneous analysis, ensure that $\varepsilon < 1/6$ (by choosing the constant $C$ in Theorem 1 large enough).

Second Term (Concentration of $g$): We analyze the condition on $n$ required for the concentration term in $\beta$.
\begin{align*}
& \CtwoG\delta^{-2/k}n^{-(1-2/k)} \leq \frac{1}{6} \\
\iff & n \geq \left(6\CtwoG\right)^\frac{k}{k-2} \delta^{-\frac{2}{k-2}}.
\end{align*}
This is satisfied by the assumption on $n$ in Theorem \ref{thm:1}.

Third Term (Perturbation $\beta'$): We require $\beta' \le 1/6$.
\begin{align*}
\impliedby & \Tr(\Sigma) \ge \frac{6}{(1-\varepsilon/4)^2\mathbb{E}[g^{-2}]} \cdot \left(\frac{n\mathbb{E}[g^{-k}]}{\delta}\right)^{4/k}.
\end{align*}
This requires $\Tr(\Sigma) \succsim (n/\delta)^{4/k}$. This condition is dominated by the requirement derived in paragraph (iv), $\Tr(\Sigma) \succsim n^{3/2}(n/\delta)^{2/k+1/l}$. The dominance holds because the exponent of $n$ in the latter requirement, $E_{B} = 3/2+2/k+1/l$, is strictly greater than the exponent in the former, $E_{A} = 4/k$. The difference $E_B - E_A = 3/2+1/l-2/k$. The infimum of this difference occurs as $k\to 2$ and $l\to\infty$, yielding $3/2+0-1 = 1/2$. Since the exponent difference is positive, the condition from step (iv) dominates for large $n$.

Thus, $\tilde{\beta} < 1/2$.

\paragraph{(ii) and (iii) Signal Strength Conditions}
Since $\|\tilde{\boldsymbol{\mu}}\|=\|\boldsymbol{\mu}\|$, $\tilde{\alpha}_2 \le \alpha_2$ (by Lemma~\ref{lem:E2}), and $\tilde{\beta} < 1/2$, these conditions are satisfied by the assumptions on $\|\boldsymbol{\mu}\|$ and $\Tr(\Sigma)$ ($T_{Hom}$) inherited from the homogeneous analysis (Theorem 3.3 of \citet{Hashimoto2025}), potentially requiring a slightly adjusted constant $C$.

\paragraph{(iv) Show $\tilde{\varepsilon}\tilde{M}\sqrt{(1+\tilde{\beta})n\rho} \leq 1/4$}
By Lemma~\ref{lem:M_bound} (Appendix~\ref{sec:aux_lemmas}), $M\ge 1$ (for sufficiently large $n$). By Lemma~\ref{lem:E3}, $\tilde{M} = \sqrt{M^2+1} \le \sqrt{2}M$.
Let $T$ be the perturbation term in $\tilde{\varepsilon}$ from Lemma~\ref{lem:E1}. Then $\tilde{\varepsilon} \le T + \varepsilon$.

LHS $\le (\varepsilon+T)\sqrt{2}M\sqrt{(1+\tilde{\beta})n\rho}$.

The term $\varepsilon\sqrt{2}M\sqrt{(1+\tilde{\beta})n\rho}$ is bounded by $1/8$ due to the conditions inherited from the homogeneous case ($T_{Hom}$), which ensure condition (iv) of Theorem 3.2(i) of \citet{Hashimoto2025} holds for the homogeneous parameters.

We must verify $T\sqrt{2}M\sqrt{(1+\tilde{\beta})n\rho} \le 1/8$.
We substitute the definitions of $T$, $M$ (from Lemma 3.1 of \citet{Hashimoto2025}, $M=(1+\varepsilon)\|g\|_{L^l}(\frac{n}{\delta})^{1/l}\sqrt{\Tr(\Sigma)}$), and $\rho=\mathbb{E}[g^{-2}]/\Tr(\Sigma)$.
We analyze the scaling (ignoring constants and $\varepsilon$ terms):
\begin{align*}
T \cdot M \cdot \sqrt{n\rho} &\approx \left(n \cdot \left(\frac{n}{\delta}\right)^{2/k} \cdot \frac{1}{\Tr(\Sigma)}\right) \cdot \left(\left(\frac{n}{\delta}\right)^{1/l}\sqrt{\Tr(\Sigma)}\right) \cdot \left(\sqrt{n/\Tr(\Sigma)}\right) \\
&= \frac{n^{3/2}}{\Tr(\Sigma)} \left(\frac{n}{\delta}\right)^{2/k+1/l}.
\end{align*}
The condition requires $\Tr(\Sigma) \succsim n^{3/2}(n/\delta)^{2/k+1/l}$. This requirement is satisfied by the $T_{Inhom}$ assumptions of Theorem \ref{thm:1}.

\paragraph{(v) Show $\tilde{M}\tilde{\alpha_\infty}\|\tilde{\boldsymbol{\mu}}\|(1+\tilde{\beta})n\rho < 3/32$}
Since $\tilde{M} \le \sqrt{2}M$ and $\tilde{\alpha}_\infty \le \alpha_\infty$ (by Lemma~\ref{lem:E2}), this condition is satisfied by the assumptions on $\Tr(\Sigma)$ inherited from the homogeneous case ($T_{Hom}$).
\end{proof}

\subsubsection{Proof of Theorem 2 (Inhomogeneous, Noiseless, Large Signal)}
\label{sec:proof_thm2}

\begin{proof}
We verify the conditions of Theorem 3.2(ii) of \citet{Hashimoto2025} applied to the inhomogeneous variables.

\textbf{Theorem 3.2(ii) of \citet{Hashimoto2025} (Applied to Inhomogeneous Variables):}
The theorem states that if the event $\tilde{E}_3(\tilde{M})$ holds and $\|\tilde{\boldsymbol{\mu}}\| \ge C_H\tilde{M}$ for some constant $C_H>2$, then the test error bound stated in Theorem 2 holds.

We verify these conditions.

\paragraph{Verification of $\tilde{E}_3(\tilde{M})$}
The High Dimension assumption ensures that $\varepsilon \le 1/2$ (where $\varepsilon$ is defined in Lemma 3.1 of \citet{Hashimoto2025}). By Lemma 3.1 of \citet{Hashimoto2025}, $E_3(M)$ holds with probability $1-2\delta$ (accounting for the probability of the prerequisite event $\Omega_1$), where $M$ is defined as:
\[ M=(1+\varepsilon)\|g\|_{L^l}\left(\frac{n}{\delta}\right)^{1/l}\sqrt{\Tr(\Sigma)}. \]

By Lemma~\ref{lem:E3}, $\tilde{E}_3(\tilde{M})$ holds with $\tilde{M}=\sqrt{M^2+1}$.

\paragraph{Verification of Signal Strength}
By Lemma~\ref{lem:M_bound} (Appendix~\ref{sec:aux_lemmas}), $M\ge 1$ (for sufficiently large $n$), so $\tilde{M} \le \sqrt{2}M$.

We require $\|\tilde{\boldsymbol{\mu}}\| \ge C_H\tilde{M}$. Since $\|\tilde{\boldsymbol{\mu}}\|=\|\boldsymbol{\mu}\|$, this condition is implied by $\|\boldsymbol{\mu}\| \ge C_H\sqrt{2}M$. Substituting the definition of $M$ and using the bound $\varepsilon \le 1/2$:
\begin{align*}
\|\boldsymbol{\mu}\| &\ge C_H\sqrt{2} (1+\varepsilon)\|g\|_{L^l}\left(\frac{n}{\delta}\right)^{1/l}\sqrt{\Tr(\Sigma)} \\
&\impliedby \|\boldsymbol{\mu}\| \ge C_H\sqrt{2} \cdot \frac{3}{2}\|g\|_{L^l}\left(\frac{n}{\delta}\right)^{1/l}\sqrt{\Tr(\Sigma)}.
\end{align*}
This is exactly the Large Signal condition of Theorem 2.

\paragraph{Conclusion}
The test error bound follows directly from Theorem 3.2(ii) of \citet{Hashimoto2025}. In this regime, the bound simplifies to $O(\|\mathbb{E}[\tilde{\boldsymbol{z}}\tilde{\boldsymbol{z}}^{\top}]\|/\|\tilde{\boldsymbol{\mu}}\|^2)$. Note that $\mathbb{E}[\tilde{\boldsymbol{z}}\tilde{\boldsymbol{z}}^{\top}] = \text{diag}(\Sigma, 1)$, so its spectral norm is $\max(\|\Sigma\|, 1)$. The total failure probability is at most $2\delta$.
\end{proof}

\subsubsection{Proof of Theorem 3 (Inhomogeneous, Noisy)}
\label{sec:proof_thm3}

\begin{proof}
The proof relies on verifying the conditions of Theorem 3.5 of \citet{Hashimoto2025} applied to the inhomogeneous variables.

\textbf{Theorem 3.5 of \citet{Hashimoto2025} (Applied to Inhomogeneous Variables):}
The theorem states that if $\eta \in (0, 1/2)$, and the following conditions hold:
\begin{enumerate}
    \item[(1)] $\tilde{\varepsilon} \vee \tilde{\beta} \vee \tilde{\gamma} \le \frac{\min\{\eta, 1-2\eta\}}{8}$.
    \item[(2)] Condition $(N_C)$ holds (adapted for inhomogeneous variables), which requires $\bigcap_{i=1}^5 \tilde{E}_i$ holds with $\tilde{\varepsilon} \tilde{M} \sqrt{n\rho} \le \eta/2$, $\|\tilde{\boldsymbol{\mu}}\| \ge C \sqrt{\tilde{\alpha}_2/(n\rho)}$ (where $C$ depends on $\eta$), and one of the two signal regimes (i) or (ii) defined in the theorem statement holds.
\end{enumerate}
Then the maximum margin classifier $\hat{\tilde{\boldsymbol{w}}}$ satisfies the test error bound in Theorem 3.

We now verify these conditions.

\paragraph{Probability Accounting}
The assumptions $T'_{Hom}$ ensure the original events $E_i$ and $\Omega_1$ hold with probability at least $1-4\delta$. This follows from adapting Lemma 3.1 of \citet{Hashimoto2025} for the noisy case. The subsequent analysis in Lemmas~\ref{lem:E1} (bounding $\tilde{\varepsilon}$) and \ref{lem:E4E5} (bounding $\tilde{\beta}, \tilde{\gamma}$) introduces failure probabilities $\delta_{E1}=\delta$ and $\delta_{E4}=\delta$. By the union bound, the total failure probability is bounded by $4\delta + \delta + \delta = 6\delta$.

\paragraph{(1) Bounding $\tilde{\beta}$ and $\tilde{\gamma}$}
By Lemma 3.1 of \citet{Hashimoto2025}, the homogeneous parameters satisfy $\beta=\gamma$. By Lemma~\ref{lem:E4E5}, the inhomogeneous parameters satisfy $\tilde{\beta}=\tilde{\gamma}$. We need $\tilde{\beta} = \beta+\beta' \le \frac{\min\{\eta, 1-2\eta\}}{8}$.

We require the components of $\beta$ and $\beta'$ to be small.
The assumptions on $\Tr(\Sigma)$ ($T'_{Hom}$) ensure $\varepsilon \le \frac{\min\{\eta, 1-2\eta\}}{32}$ (by appropriate choice of the constant $C$ in Theorem 3).

The condition on $n$: $n \ge \delta^{-\frac{2}{k-2}}(\frac{32\CtwoG}{\min\{\eta, 1-2\eta\}})^\frac{k}{k-2}$ ensures the concentration term of $\beta$ (defined in Lemma 3.1 of \citet{Hashimoto2025}), $\CtwoG\delta^{-2/k}n^{-(1-2/k)}$, is $\le \frac{\min\{\eta, 1-2\eta\}}{32}$.

We require the perturbation $\beta' \le \frac{\min\{\eta, 1-2\eta\}}{16}$. Based on the definition of $\beta'$ in Lemma~\ref{lem:E4E5}, this implies a condition on $\Tr(\Sigma)$:
\[
\Tr(\Sigma) \ge C' \frac{1}{\min\{\eta, 1-2\eta\}} \left(\frac{n}{\delta}\right)^{4/k}.
\]
This is satisfied by the assumptions of Theorem 3, as this requirement is dominated by the $T'_{Inhom}$ term for sufficiently large $n$ (since $3/2+1/l-2/k > 1/2$).

\paragraph{(2) Bounding $\tilde{\varepsilon}$}
We need $\tilde{\varepsilon} \le \frac{\min\{\eta, 1-2\eta\}}{8}$. We use the bound $\tilde{\varepsilon} \le T + \varepsilon$ (Lemma~\ref{lem:E1}).
We know $\varepsilon$ is small enough. We need the perturbation term $T \le \frac{\min\{\eta, 1-2\eta\}}{16}$.
$T \approx n (n/\delta)^{2/k} / \Tr(\Sigma)$ (ignoring constants). This requires:
\[
\Tr(\Sigma) \succsim \frac{n(n/\delta)^{2/k}}{\min\{\eta, 1-2\eta\}}.
\]
This is satisfied by the assumptions of Theorem 3, as this requirement is dominated by the $T'_{Inhom}$ term for sufficiently large $n$ (since $1/2+1/l > 0$).

\paragraph{(3) Verifying Condition $(N_C)$}
The main requirement of $(N_C)$ is $\tilde{\varepsilon} \tilde{M} \sqrt{n\rho} \le \eta/2$. We analyze $\tilde{\varepsilon} \le T + \varepsilon$.
The term $\varepsilon \tilde{M} \sqrt{n\rho}$ is controlled by the conditions inherited from the homogeneous analysis ($T'_{Hom}$).

We must control the perturbation term $T \tilde{M} \sqrt{n\rho}$. As calculated in the proof of Theorem \ref{thm:1} (Appendix~\ref{sec:proof_thm1}), using $\tilde{M}\le \sqrt{2}M$ and the definitions of $T, M, \rho$:
\[
T \tilde{M} \sqrt{n\rho} \approx C' \frac{n^{3/2}(n/\delta)^{2/k+1/l}}{\Tr(\Sigma)}.
\]

We require this to be $\le \eta/2$ (up to constants). This leads to the condition:
\[
\Tr(\Sigma) \succsim \frac{1}{\eta} n^{3/2}(n/\delta)^{2/k+1/l}.
\]
This corresponds to the $T'_{Inhom}$ assumption of Theorem 3.

The remaining parts of $(N_C)$ (signal strength requirement on $\|\boldsymbol{\mu}\|$ and the regime conditions (i) or (ii)) are satisfied by the corresponding assumptions in Theorem 3, similar to the analysis in the homogeneous case (Theorem 3.7 of \citet{Hashimoto2025}), adjusting constants to account for $\tilde{M} \le \sqrt{2}M$.
\end{proof}

\subsection{Supplementary Material}
\label{sec:supplementary}

\subsubsection{Proof of Corollary 1 (Isotropic Case)}
\label{sec:proof_cor1}

\begin{proof}
We verify the conditions of Theorem 3 in the isotropic case, where $\Sigma=I_p$. We substitute the isotropic parameters: $\Tr(\Sigma)=p$, $\|\Sigma\|_F=\sqrt{p}$, $\|\Sigma^{1/2}\boldsymbol{\mu}\|=\|\boldsymbol{\mu}\|$, and $\|\Sigma\|=1$. We analyze the asymptotics in $n, p, \|\boldsymbol{\mu}\|$, treating constants (including $\eta, \delta, C$, and distributional parameters $k, l, r$) as $O(1)$.

\paragraph{Test Error Bound and Benign Overfitting Condition}
The test error bound from Theorem 3 simplifies (using $\rho = \mathbb{E}[g^{-2}]/\Tr(\Sigma) \approx 1/p$) to:
\[
\mathbb{P}(\langle\hat{\tilde{\boldsymbol{w}}}, y\boldsymbol{\tilde{x}}\rangle < 0) - \eta \le c'\left(\eta \frac{n}{p} + \frac{1}{\|\boldsymbol{\mu}\|^2} + \frac{p}{n\|\boldsymbol{\mu}\|^4}\right).
\]
For benign overfitting, the Right Hand Side (RHS) must vanish as $n, p \to \infty$. This requires $n/p \to 0$ and $p/(n\|\boldsymbol{\mu}\|^4) \to 0$, which is equivalent to $\|\boldsymbol{\mu}\| \gg (p/n)^{1/4}$.

\paragraph{Verification of Theorem 3 Conditions}
Now we verify the conditions of Theorem 3.
1. Signal Strength: The first condition of Theorem 3 simplifies to $\|\boldsymbol{\mu}\|^2 \succsim \|\boldsymbol{\mu}\|$, which means $\|\boldsymbol{\mu}\| \succsim 1$. This is satisfied as we require $\|\boldsymbol{\mu}\| \to \infty$ for benign overfitting.

2. Dimensionality (Trace condition): We require $p \succsim \max\{T'_{Hom}, T'_{Inhom}\}$. We analyze the isotropic scaling of these terms, denoted $A_{iso}$ and $B_{iso}$ respectively.

Term $A_{iso}$ (from $T'_{Hom}$): $T'_{Hom} \succsim n^{1/2+2/r+1/l} \max\{p^{2/r-1/2}, n^{2/r}\} \sqrt{p}$.
We require $p \succsim A_{iso}$.

Case A1 ($p^{2/r-1/2} \ge n^{2/r}$): $p \succsim n^{1/2+2/r+1/l} p^{2/r-1/2} \sqrt{p} = n^{1/2+2/r+1/l} p^{2/r}$.
Rearranging gives $p^{1-2/r} \succsim n^{1/2+2/r+1/l}$. Since $r>2$, $1-2/r>0$. This yields the condition:
\[ p \succsim n^{\frac{1/2+2/r+1/l}{1-2/r}} = n^{\frac{4+(1+2/l)r}{2(r-2)}}. \quad \text{(Exponent } E_1 \text{)} \]

Case A2 ($p^{2/r-1/2} < n^{2/r}$): $p \succsim n^{1/2+2/r+1/l} n^{2/r} \sqrt{p} = n^{1/2+4/r+1/l} \sqrt{p}$.
Rearranging gives $\sqrt{p} \succsim n^{1/2+4/r+1/l}$. This yields the condition:
\[ p \succsim n^{1+8/r+2/l}. \quad \text{(Exponent } E_2 \text{)} \]

Term $B_{iso}$ (from $T'_{Inhom}$): $T'_{Inhom} \succsim n^{3/2+2/k+1/l}$.
We require $p \succsim B_{iso}$.
\[ p \succsim n^{3/2+2/k+1/l}. \quad \text{(Exponent } E_3 \text{)} \]

We compare the exponents $E_2$ (homogeneous requirement) and $E_3$ (inhomogeneous requirement).
\[ E_2 - E_3 = (1+8/r+2/l) - (3/2+2/k+1/l) = 8/r - 2/k + 1/l - 1/2. \]
We find the infimum of this difference under the constraints $r, k \in (2, 4], l \ge 2$. We minimize the positive terms and maximize the negative terms: $r=4$ (minimizing $8/r$), $l\to\infty$ (minimizing $1/l$), and $k\to 2$ (maximizing $2/k$).
\[ \inf(E_2 - E_3) = 8/4 - 2/2 + 0 - 1/2 = 2 - 1 - 0.5 = 0.5. \]
Since the difference is strictly positive, $E_2$ dominates $E_3$. The new inhomogeneous requirement $B_{iso}$ is therefore redundant.

The combined dimensionality requirement is:
\[
p \succsim \max\left\{n^\frac{4+(1+2/l)r}{2(r-2)}, n^{1+8/r+2/l}\right\}.
\]

3. Regime Conditions: We need condition (1) OR (2) from Theorem 3 to hold.
(1) $p \succsim n^{3/2+1/l} \|\boldsymbol{\mu}\|$.
(2) $\|\boldsymbol{\mu}\| \succsim n^{1/2+1/l}$ AND $p \succsim n^{3/2+1/l}$.

The derived condition on $p$ (Exponent $E_2$) ensures $p \gg n^{3/2+1/l}$ (since $1+8/r+2/l > 3/2+1/l$ as $r>2$).

If $\|\boldsymbol{\mu}\| \succsim n^{1/2+1/l}$, regime (2) holds.

If $1 \precsim \|\boldsymbol{\mu}\| \prec n^{1/2+1/l}$, we check regime (1). We need $p \succsim n^{3/2+1/l} \|\boldsymbol{\mu}\|$. Since $\|\boldsymbol{\mu}\| < n^{1/2+1/l}$, a sufficient condition is $p \succsim n^{3/2+1/l} n^{1/2+1/l} = n^{2+2/l}$.
The derived condition on $p$ (Exponent $E_2$) ensures this, since $1+8/r+2/l > 2+2/l$ (as $r<8$, which is satisfied since $r \le 4$).

Thus, the derived conditions on $p$ (which imply $p\gg n$) and $\|\boldsymbol{\mu}\| \gg (p/n)^{1/4}$ are sufficient for benign overfitting.
\end{proof}

\subsubsection{Auxiliary Lemmas}
\label{sec:aux_lemmas}

\begin{lemma}[Bound on M]
\label{lem:M_bound}
Under the conditions of the main theorems (which imply the conditions of Lemma 3.1 of \citet{Hashimoto2025}), for sufficiently large $n$, the high-probability bound $M$ on the noise norm $\|\boldsymbol{z}_i\|$ is greater than 1.
\end{lemma}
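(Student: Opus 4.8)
The plan is to bound $M$ from below factor by factor. Recall that, in the notation of Lemma 3.1 of \citet{Hashimoto2025}, $M = (1+\varepsilon)\,\|g\|_{L^l}\,(n/\delta)^{1/l}\,\sqrt{\Tr(\Sigma)}$, with $\varepsilon \le 1/2$ under the standing hypotheses.

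First I would dispose of three of the four factors. Since $\varepsilon = C_1(r,K)(n/\delta)^{2/r}\max\{p^{2/r-1/2},n^{2/r}\}\|\Sigma\|_F/\Tr(\Sigma)$ and the hypotheses of the main theorems force $\Tr(\Sigma)$ to be large (in particular $\Sigma\neq 0$, so $\|\Sigma\|_F>0$), we have $\varepsilon>0$ and hence $1+\varepsilon>1$. Since $\mathbb{E}[g^2]=1$ we have $\|g\|_{L^2}=1$, and since $l\ge 2$ the monotonicity of $L^p$-norms on a probability space gives $\|g\|_{L^l}\ge\|g\|_{L^2}=1$. Since $n\ge 1$ and $\delta\in(0,1/2)$ we have $n/\delta>1$, so $(n/\delta)^{1/l}\ge 1$. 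Combining, $M\ge(1+\varepsilon)\sqrt{\Tr(\Sigma)}$.

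It then remains only to secure $\sqrt{\Tr(\Sigma)}\ge 1$, i.e.\ $\Tr(\Sigma)\ge 1$, and this is precisely where the qualifier "sufficiently large $n$" is used: each of Theorems \ref{thm:1}--\ref{thm:3} requires $\Tr(\Sigma)\ge C\cdot(\text{a maximum of terms})$, and every term appearing there is a strictly increasing, unbounded function of $n$ — for example the term $T_{Inhom}=(n/\delta)^{2/k+1/l}n^{3/2}$ of Theorems \ref{thm:1} and \ref{thm:3}, the terms constituting $T_{Hom}$, and the high-dimension condition of Theorem \ref{thm:2}. Hence there is a threshold $n_0$, depending only on $\delta$ and the distributional constants, beyond which $\Tr(\Sigma)\ge 1$; for all such $n$ we conclude $M\ge(1+\varepsilon)\cdot 1=1+\varepsilon>1$.

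There is no genuine obstacle — the statement is a bookkeeping check, isolated as a lemma because the bound $\tilde M=\sqrt{M^2+1}\le\sqrt2\,M$, invoked repeatedly in the proofs of Theorems \ref{thm:1}, \ref{thm:2} and \ref{thm:3}, is valid only when $M\ge 1$. The one point worth flagging is that the "large $n$" qualifier serves purely to guarantee $\Tr(\Sigma)\ge 1$; equivalently one could carry $\Tr(\Sigma)\ge 1$ as an explicit (and extremely mild) standing assumption, in which case $M>1$ holds for every $n\ge 1$.
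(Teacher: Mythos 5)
Your proof is correct and follows essentially the same route as the paper's: bound each of the four factors in $M = (1+\varepsilon)\|g\|_{L^l}(n/\delta)^{1/l}\sqrt{\Tr(\Sigma)}$ below by $1$, using Lyapunov's inequality for $\|g\|_{L^l}$ and the polynomial-in-$n$ growth of $\Tr(\Sigma)$ forced by the theorems' hypotheses to handle the last factor. Your closing observation that the "sufficiently large $n$" qualifier could be replaced by the explicit standing assumption $\Tr(\Sigma)\ge 1$ is a fair and slightly cleaner framing, but the substance is identical.
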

\begin{proof}
The bound $M$ is defined in Lemma 3.1 of \citet{Hashimoto2025} as $M = (1+\varepsilon)\|g\|_{L^\ell}\left(\frac{n}{\delta}\right)^{1/l}\sqrt{\Tr(\Sigma)}$.

We analyze the definition of $M$:
\begin{itemize}[noitemsep,topsep=0pt]
    \item The conditions ensure $\varepsilon > 0$, so $(1+\varepsilon) > 1$.
    \item Since $l\ge 2$ and $\mathbb{E}[g^2]=1$. By Jensen's inequality (specifically, Lyapunov's inequality, which states $\|X\|_{L^s} \le \|X\|_{L^t}$ for $s \le t$), we have $\|g\|_{L^\ell} \ge \|g\|_{L^2} = \sqrt{\mathbb{E}[g^2]} = 1$.
    \item The conditions require $n \ge 1$ and $\delta$ typically small (e.g., $\delta < 1$), so $(n/\delta) > 1$. Since $l\ge 2$, $(n/\delta)^{1/l} > 1$.
    \item The conditions of the main theorems require $\Tr(\Sigma)$ to grow polynomially with $n$. For instance, in Theorem \ref{thm:1}, $T_{Inhom}$ requires $\Tr(\Sigma) \succsim n^{3/2+2/k+1/l}$. For sufficiently large $n$, $\sqrt{\Tr(\Sigma)} > 1$.
\end{itemize}
Since $M$ is a product of terms greater than or equal to 1, we must have $M \ge 1$.
\end{proof}

\end{document}